\pdfoutput=1
\documentclass{article}
\usepackage{iclr2027_conference,times}
\usepackage[T1]{fontenc}
\usepackage{amsmath,amssymb,mathtools,bm,amsthm}
\newtheorem{proposition}{Proposition}
\usepackage{graphicx,booktabs,tabularx,array,multirow}
\usepackage{xcolor,tikz}
\usetikzlibrary{
  arrows.meta,
  positioning,
  calc,
  fit,
  backgrounds,
  decorations.markings
}
\usepackage{algorithm,algpseudocode}
\usepackage{microtype,placeins,capt-of,wrapfig}
\usepackage{hyperref,url}

\usepackage{munite-title}

\munitesetstyle{chromatic_brackets}

\hypersetup{
  hidelinks,
  pdftitle={MUNITE: Unified Multimodal Latent Inference for Any-to-Any Multimodal Generation},
  pdfauthor={Kyeongmin Yeo, Minhyuk Sung},
  pdfkeywords={any-to-any multimodal generation, latent variable models, flow matching},
}
\newcommand{\headpre}{\vspace{0pt}}
\newcommand{\headpost}{\vspace{0pt}}
\newcolumntype{Y}{>{\raggedright\arraybackslash}X}
\definecolor{latentblue}{RGB}{39,78,114}
\definecolor{latentfill}{RGB}{235,241,247}
\definecolor{inkgray}{gray}{0.35}

\newcommand{\sg}{\operatorname{sg}}

\title{\MUNITEtitle: Unified Multimodal Latent Inference\\for Any-to-Any Multimodal Generation}
\author{Kyeongmin Yeo \quad Minhyuk Sung \\
KAIST \\
\texttt{\{aaaaa, mhsung\}@kaist.ac.kr} 
}

\iclrfinalcopy
\begin{document}
\maketitle
\lhead{}
\begin{abstract}
We introduce MUNITE, a latent-variable framework for flexible any-to-any multimodal generation that treats encoding and latent generation as the same inference problem under different amounts of observed evidence. Given any subset of modalities, MUNITE models the conditional distribution over the latent representation associated with the complete observation. Full observation recovers deterministic encoding, no observation recovers the latent marginal, and intermediate subsets define conditional latent inference, all within a single conditional flow model. A shared latent sample captures variation that must remain consistent across generated targets, while modality-specific generative decoders model the remaining uncertainty independently. To learn these conditional distributions from incomplete training examples, we extend conditional flow matching through self-distillation: predictions conditioned on richer available observations supervise the same model conditioned on smaller subsets at the same intermediate latent state. When the richer-evidence trajectory follows the exact conditional flow, this provides the same expected learning signal as full-target denoising.
Across PolyMNIST-D-Q, FFHQ64, and image--text--audio, MUNITE achieves competitive or better generation quality and source--target alignment, with higher joint-generation coherence. In particular, it attains the highest coherence in all one-to-many and unconditional image--text--audio comparisons, showing the effectiveness of unified latent inference across diverse multimodal settings. Project~page: \href{https://munite-proj.github.io/}{\texttt{munite-proj.github.io}}.
\end{abstract}
\headpre
\section{Introduction}
\headpost

Amid rapid progress in large foundation models for multimodal generation, an important complementary direction is to develop flexible any-to-any multimodal generative models without requiring a large language model as the multimodal backbone.
Many practical applications may depend on only a subset of the capabilities offered by general-purpose language models, often without advanced text reasoning, coding, or mathematical problem solving, and in some cases with little or no text processing.
Such applications call for models specialized to the modalities at hand that still support arbitrary combinations of observed and generated modalities.

A natural alternative is to build multimodal generation around a shared latent space. Latent space modeling has long been successful for images and videos~\citep{rombach2022ldm,blattmann2023svd}, has recently been explored for language generation through diffusion models~\citep{lovelace2023latent}, and is now emerging as a promising basis for any-to-any multimodal generation~\citep{yeo2026muni,cha2026flowbind}. Recent approaches further blur the boundary between representation learning and generation by jointly learning modality encoders, decoders, and the latent generative model, rather than fixing the latent representation before training the generator~\citep{heek2026ul,yeo2026muni}. This suggests a broader possibility: instead of treating multimodal encoding and latent generation as separate components, can they be viewed as instances of the same underlying modeling problem?

We answer this question by \textbf{unifying these roles within a single conditional model}. Our key observation is that full-observation encoding, partial-observation latent inference, and unconditional latent generation correspond to different amounts of evidence about the same latent representation (Figure~\ref{fig:latent-inference}). Given an arbitrary subset of modalities, our model learns the conditional distribution over latent representations of complete observations. When all modalities are observed, this distribution collapses to a point mass, recovering deterministic multimodal encoding. When no modality is observed, it becomes the marginal latent distribution, recovering unconditional generation. Every intermediate subset corresponds to conditional latent inference, so encoding, conditional inference, and unconditional generation form a spectrum indexed by the observed modality subset. This view generalizes the Generative Encoder of UNITE~\citep{duggal2026unite}, whose single network covers only the two endpoints of full and absent observation, to every level of partial observation. The inference model is coupled with modality-specific generative decoders through a shared latent: one latent sample captures the variation that must be shared across generated targets, while the decoders model the remaining modality-specific uncertainty.

\begin{wrapfigure}{R}{0.49\linewidth}
\vspace{-12pt}
\centering

\input{figures/latent_concept}

\caption{%
  \textbf{Latent inference and decoding.}
  Left: latent distributions under different observation subsets.
  Right: factorized decoding from a latent value.%
}
\label{fig:latent-inference}
\vspace{-10pt}
\end{wrapfigure}

This formulation also creates a natural mechanism for self-distillation across observation levels. A partially observed training example does not provide the latent representation of its complete observation as a direct training target. Instead, the model conditioned on a richer set of available modalities provides a clean-latent target for the same model conditioned on a smaller subset, evaluated at the same intermediate flow state. Under the assumptions developed in Section~\ref{sec:distillation}, we show that this self-distillation objective yields the same expected gradient as denoising against the unavailable full-observation latent. This allows partially observed examples to contribute directly to learning the desired conditional distribution.

We evaluate MUNITE on PolyMNIST-D-Q, FFHQ64, and image--text--audio generation, which together span categorical attributes, dense visual modalities, and heterogeneous text, image, and audio, with training data that mix fully and partially observed examples. MUNITE attains the best unconditional fidelity and coherence on PolyMNIST-D-Q and FFHQ64 and the highest coherence in every one-to-many and unconditional image--text--audio comparison, while remaining competitive on one-to-one and many-to-one generation. These results show that a single conditional latent-inference model can support the full range of observation regimes in any-to-any generation, improving joint coherence while retaining the benefits of modality-specific generative decoders.

\headpre
\section{Method}
\label{sec:method}
\headpost

MUNITE learns any-to-any generation from a mixture of fully and partially observed multimodal data. We first introduce a single latent inference model that, for any observed subset of modalities, captures the conditional distribution over latent representations of complete observations. We pair it with modality-specific generative decoders, so that the shared latent carries cross-modal dependencies and the decoders model the remaining uncertainty (Section~\ref{sec:shared-latent}).
We learn the latent representation through cross-modal reconstruction (Section~\ref{sec:reconstruction}) and its evidence-conditioned distributions through self-distillation, which extends conditional flow matching to partially observed examples (Section~\ref{sec:distillation}). An auxiliary contrastive loss further encourages alignment across observation subsets (Section~\ref{sec:contrastive}).

\headpre
\subsection{Shared-Latent Generative Framework}
\label{sec:shared-latent}
\headpost

Let $\mathcal M=\{1,\ldots,M\}$ index the modalities and $X^{\mathcal M}\sim P_{\mathrm{data}}$ denote a complete observation.
We learn a latent representation $W$ of complete observations together with a single model $\mathcal E$ that operates on an arbitrary observed subset $\mathcal S\subseteq\mathcal M$.
A partial observation $x^{\mathcal S}$ generally admits multiple possible completions $x^{\mathcal M}$, which in turn induce a conditional distribution $Q_{\mathcal E}(\cdot\mid x^{\mathcal S})$ over their latent representations.

At the two extremes, the same model $\mathcal E$ reduces to familiar operations.
When $\mathcal S=\mathcal M$, $\mathcal E$ acts as a deterministic encoder, yielding a unique latent representation $w=\mathcal E(x^{\mathcal M})\in\mathbb R^D$. Accordingly, the conditional distribution collapses to $Q_{\mathcal E}(\cdot\mid x^{\mathcal M})=\delta_w$.
When $\mathcal S=\emptyset$, the model instead generates from the latent marginal $Q_{\mathcal E}(\mathrm{d}w)$.
For an intermediate subset $\emptyset\subsetneq\mathcal S\subsetneq\mathcal M$, it models $Q_{\mathcal E}(\mathrm{d}w\mid x^{\mathcal S})$ (Figure~\ref{fig:latent-inference}, left).
We parameterize $\mathcal E$ as a conditional flow model that transports a standard Gaussian to the corresponding distribution $Q_{\mathcal E}(\cdot\mid x^{\mathcal S})$ for any observed subset $\mathcal S$.

\paragraph{A generative-encoder perspective.}
Motivated by the success of latent diffusion models~\citep{rombach2022ldm}, Unified Latents~\citep{heek2026ul} proposed jointly learning latent representations of the data and a generative model of their distribution.
UNITE~\citep{duggal2026unite} took this idea further by introducing a \emph{Generative Encoder} that uses the same model to encode an observed image and, when no image is observed, to generate samples from the latent distribution.
Multimodal data add a natural intermediate regime in which only a subset of modalities is observed, and MUNITE extends the generative-encoder view to it by modeling the conditional distribution of the full-observation latent given an arbitrary observed subset.
Full-observation encoding and unconditional latent generation then become the endpoints $\mathcal S=\mathcal M$ and $\mathcal S=\emptyset$ of a family of latent-inference problems $\{Q_{\mathcal E}(\cdot\mid x^{\mathcal S})\}_{\mathcal S\subseteq\mathcal M}$.

To generate multimodal observations, we pair the latent inference
model $\mathcal E$ with modality-specific generative decoders
$\{\mathcal D_{\psi_m}\}_{m\in\mathcal M}$.
As in multimodal VAEs~\citep{wu2018mvae,shi2019mmvae}, the decoders
generate independently conditioned on a shared latent.
For this factorization to preserve the dependencies in the data,
we require $W$ to be
\emph{coherence-sufficient}~\citep{yeo2026muni}:
\begin{equation}
P_{\mathrm{data}}(\mathrm{d}x^{\mathcal M}\mid W=w)
=
\prod_{m\in\mathcal M}
P_{\mathrm{data}}(\mathrm{d}x^m\mid W=w).
\label{eq:sufficiency}
\end{equation}
The latent thus accounts for cross-modal dependence, leaving
the remaining modality-specific variation to the decoders.
For disjoint input and target sets
$\mathcal S,\mathcal T\subseteq\mathcal M$, marginalizing over
the latent gives
\begin{equation}
P_{\mathrm{data}}(\mathrm{d}x^{\mathcal T}\mid x^{\mathcal S})
=
\int
\prod_{m\in\mathcal T}
P_{\mathrm{data}}(\mathrm{d}x^m\mid W=w)\,
Q_{\mathcal E}(\mathrm{d}w\mid x^{\mathcal S}).
\label{eq:conditional-joint}
\end{equation}
We train each decoder to model the conditional distribution
of its modality given $W$.
At inference, we use $\mathcal E$ to sample one latent $w$
conditioned on $x^{\mathcal S}$ and generate the target
modalities in parallel with
$\{\mathcal D_{\psi_m}(w)\}_{m\in\mathcal T}$.

\headpre
\subsection{Learning the Shared Representation}
\label{sec:reconstruction}
\headpost

We learn the shared representation by reconstructing observed modalities through their generative decoders.
We write $\mathcal E_\theta(w_t,t;x^{\mathcal S})$ for the clean-latent prediction of the model $\mathcal E$ with parameters $\theta$.
For a training example $x^{\mathcal A}$, let the nonempty set $\mathcal A\subseteq\mathcal M$ denote its available modalities.
For reconstruction, we sample a target modality $m\in\mathcal A$ and a conditioning subset $\mathcal S\subseteq\mathcal A$ as described in Appendix~\ref{app:subset-sampling}.
Given a Gaussian query $\epsilon\sim\mathcal N(0,I_D)$ at $t=0$, the reconstruction objective is

\begin{equation}
\mathcal L_{\mathrm{rec}}
=\mathbb E_{(\mathcal A,x^{\mathcal A}),\mathcal S,m,\epsilon}
\left[
\ell_m\left(
\mathcal D_{\psi_m}\big(
\mathcal E_\theta(\epsilon,0;x^{\mathcal S})
\big),x^m
\right)
\right].
\label{eq:reconstruction}
\end{equation}

Here $\ell_m$ denotes the training objective of decoder $m$, such as cross-entropy for text and labels or a denoising objective for diffusion and flow decoders. With $\mathcal S=\mathcal M$, this trains each decoder on the full-observation latents required by Equation~\ref{eq:conditional-joint}, while smaller subsets further stabilize the learning of the shared representation.

To favor information useful across modalities, we use target-detached self-reconstruction~\citep{yeo2026muni}.
When the target is also an input ($m\in\mathcal S$), it still contributes to the latent, but the reconstruction gradients through its token stream are stopped.
Consequently, the target's own reconstruction loss cannot train its token stream to pass target-private information to the latent.
We describe the implementation in Section~\ref{sec:architecture}.

\begin{figure}[t]
\centering
\begin{tikzpicture}[
  x=1cm,y=1cm,>=Latex,
  font=\fontsize{8}{9.2}\selectfont,
  every node/.style={inner sep=1.5pt,outer sep=0pt},
  obs/.style={circle,draw=black!68,fill=white,minimum size=9mm,inner sep=.3pt},
  model/.style={rectangle,rounded corners=1pt,draw=latentblue,fill=latentfill,
    minimum width=9mm,minimum height=6.3mm,align=center,inner sep=3pt},
  decoder/.style={model},
  loss/.style={rectangle,rounded corners=1pt,draw=black!62,fill=black!3,
    minimum width=10mm,minimum height=6.3mm,inner sep=3pt},
  small/.style={font=\fontsize{8}{9.2}\selectfont},
  title/.style={font=\fontsize{9}{10.5}\selectfont\bfseries,anchor=west,inner sep=0pt},
  arrow/.style={-{Latex[length=1.6mm,width=1.15mm]},line width=.55pt},
  stop/.style={postaction={decorate},decoration={markings,
    mark=at position .53 with {
      \draw[-,line width=.65pt] (-2pt,-2.5pt)--(0pt,2.5pt);
      \draw[-,line width=.65pt] (0pt,-2.5pt)--(2pt,2.5pt);
    }}},
]
\draw[black!19,line width=.4pt] (4.22,.20)--(4.22,4.02);
\draw[black!19,line width=.4pt] (9.61,.20)--(9.61,4.02);
\node[title] at (.05,4.22) {(a) Reconstruction};
\node[title] at (4.45,4.22) {(b) Self-distillation};
\node[title] at (9.84,4.22) {(c) Contrastive alignment};

\node[obs] (other) at (1.20,3.50) {\scalebox{.90}{$x^{\mathcal S\!\setminus\!\{m\}}$}};
\node[obs] (target) at (3.45,3.50) {$x^m$};
\node[model] (recE) at (1.20,2.20) {$\mathcal E_\theta$};
\node[decoder] (recD) at (1.20,.60) {$\mathcal D_{\psi_m}$};
\node[loss] (recL) at (3.45,.60) {$\mathcal L_{\mathrm{rec}}$};
\node[small] (recQ) at (.23,2.20) {$\epsilon,0$};
\coordinate (targetbranch) at (3.45,2.77);
\coordinate (inputmerge) at (1.20,2.77);
\draw[line width=.55pt] (other.south)--(inputmerge);
\draw[arrow] (inputmerge)--(recE.north);
\draw[arrow] (recQ.east)--(recE.west);
\draw[arrow] (target.south)--(recL.north);
\fill (targetbranch) circle[radius=.85pt];
\draw[line width=.55pt,stop] (targetbranch)--(inputmerge);
\fill (inputmerge) circle[radius=.85pt];
\node[small] at (2.43,3.04) {$m\in\mathcal S$};
\draw[arrow] (recE.south)--(recD.north);
\draw[arrow] (recD.east)--(recL.west);

\begin{scope}[shift={(4.42,0)}]
\node[obs] (available) at (1.42,3.50) {$x^{\mathcal A}$};
\node[model,minimum width=10mm] (teacher) at (1.42,2.20)
  {{\fontsize{7}{8}\selectfont Teacher}\\$\mathcal E_\theta$};
\node[model,minimum width=10mm] (student) at (3.92,2.20)
  {{\fontsize{7}{8}\selectfont Student}\\$\mathcal E_\theta$};
\node[obs] (condition) at (3.92,3.50) {$x^{\mathcal S}$};
\node[loss] (distL) at (2.67,.60) {$\widetilde{\mathcal L}_{\mathrm{dist}}$};
\draw[arrow] (available.south)--(teacher.north);
\draw[arrow] (condition.south)--(student.north);
\draw[arrow,stop] (teacher.east)--(student.west);
\node[small] at (2.67,2.57) {$w_t^{\mathcal A}$};
\draw[arrow]
  (teacher.north west)
  .. controls (.45,2.86) and (.16,2.75) .. (.16,2.20)
  .. controls (.16,1.65) and (.45,1.54) .. (teacher.south west);
\node[small] at (.53,2.20) {$\times N$};
\node[small] (teacherPred) at (1.42,1.16)
  {$\mathcal E_\theta(w_t^{\mathcal A},t;x^{\mathcal A})$};
\node[small] (studentPred) at (3.92,1.16)
  {$\mathcal E_\theta(w_t^{\mathcal A},t;x^{\mathcal S})$};
\draw[arrow] (teacher.south)--(teacherPred.north);
\draw[arrow] (student.south)--(studentPred.north);
\draw[arrow,stop,rounded corners=2pt]
  (teacherPred.south)--(1.42,.60)--(distL.west);
\draw[arrow,rounded corners=2pt]
  (studentPred.south)--(3.92,.60)--(distL.east);
\end{scope}

\begin{scope}[shift={(9.85,0)}]
\foreach \k in {2,1}{
  \node[obs,draw=black!58,line width=.45pt]
    (viewSback\k) at ({.69+0.14*\k},{3.4+0.07*\k}) {};
  \node[obs,draw=black!58,line width=.45pt]
    (viewCback\k) at ({3.05+0.14*\k},{3.4+0.07*\k}) {};
  \node[model,draw=latentblue!85!white,line width=.45pt]
    (modelSback\k) at ({.69+0.14*\k},{2.12+0.07*\k}) {};
  \node[model,draw=latentblue!85!white,line width=.45pt]
    (modelCback\k) at ({3.05+0.14*\k},{2.12+0.07*\k}) {};
}
\node[obs] (viewS) at (.69,3.4) {$x_i^{\mathcal S}$};
\node[obs] (viewC) at (3.05,3.4) {$x_i^{\mathcal C}$};
\node[model] (conES) at (.69,2.12) {$\mathcal E_\theta$};
\node[model] (conEC) at (3.05,2.12) {$\mathcal E_\theta$};
\node[small,inner sep=.5pt] (queryS) at (1.42,2.84)
  {$\epsilon_i^{\mathcal S},0$};
\node[small,inner sep=.5pt] (queryC) at (3.78,2.84)
  {$\epsilon_i^{\mathcal C},0$};
\node[small] (embeddingS) at (.69,1.25) {$r_i^{\mathcal S}$};
\node[small] (embeddingC) at (3.05,1.25) {$r_i^{\mathcal C}$};
\node[loss] (conL) at (1.87,.60) {$\mathcal L_{\mathrm{con}}$};
\draw[arrow] (viewS.south)--([yshift=1.4mm]conES.north);
\draw[arrow] (viewC.south)--([yshift=1.4mm]conEC.north);
\draw[arrow] (conES.south)--(embeddingS.north);
\draw[arrow] (conEC.south)--(embeddingC.north);
\draw[arrow,rounded corners=2pt]
  (embeddingS.south)--(.69,.60)--(conL.west);
\draw[arrow,rounded corners=2pt]
  (embeddingC.south)--(3.05,.60)--(conL.east);
\end{scope}
\end{tikzpicture}
\vspace{-10pt}
\caption{\textbf{MUNITE training objectives.}
(a) Reconstruction learns a shared representation, using target detaching to discourage copying modality-private information.
(b) Self-distillation uses detached predictions conditioned on more modalities to supervise subset-conditioned predictions at the same noisy latent state, with both branches using the same model.
(c) An auxiliary contrastive loss encourages semantic alignment between complementary modality subsets using in-batch negatives.}
\label{fig:pipeline}
\end{figure}

\headpre
\subsection{Conditional Flow Matching with Partial Observations}
\label{sec:distillation}
\headpost

We train $\mathcal E_\theta$ to model $Q_{\mathcal E_\theta}(\cdot\mid x^{\mathcal S})$ by conditional flow matching along a linear interpolant between standard Gaussian noise and
clean latents. Since the clean target can be computed only from a complete example, we extend the objective to partial observations through self-distillation.

\paragraph{Fully observed examples.}

The full-observation prediction provides the clean target $w=\mathcal E_\theta(\epsilon,0;x^{\mathcal M})$, where
$\epsilon\sim\mathcal N(0,I_D)$.
We use the same Gaussian sample as the initial flow state, setting $w_0=\epsilon$, and sample $t\sim\mathcal U[0,1)$.
The interpolant and the velocity field implied by the clean prediction are
\begin{equation}
w_t=(1-t)w_0+t\sg(w),\qquad
v_\theta(w_t,t;x^{\mathcal S})
=\frac{\mathcal E_\theta(w_t,t;x^{\mathcal S})-w_t}{1-t},
\label{eq:path}
\end{equation}
where $\sg$ denotes stop-gradient. Writing $W,W_0,W_t$ for the corresponding random variables, the conditional flow matching objective~\citep{lipman2023flowmatching} becomes
\begin{equation}
\mathcal L_{\mathrm{CFM}}
=\mathbb E\big[\|v_\theta(W_t,t;X^{\mathcal S})-(\sg(W)-W_0)\|_2^2\big]
=\mathbb E\left[\frac{\|\mathcal E_\theta(W_t,t;X^{\mathcal S})-\sg(W)\|_2^2}{(1-t)^2}\right].
\label{eq:cfm}
\end{equation}
Clean-latent prediction allows encoding and denoising to share their output space, as in JiT and UNITE~\citep{li2025jit,duggal2026unite}. For optimization stability, we drop the factor $(1-t)^{-2}$ and train with the unweighted denoising loss
\begin{equation*}
\widetilde{\mathcal L}_{\mathrm{CFM}}
=\mathbb E\big[\|\mathcal E_\theta(W_t,t;X^{\mathcal S})-\sg(W)\|_2^2\big].
\end{equation*}
At a fixed encoder and time $t$, both weightings have the population-optimal prediction~\citep{lipman2023flowmatching}
\begin{equation}
\mu_{\mathcal S}(w_t,t;x^{\mathcal S})
:=\mathbb E[W\mid W_t=w_t,\,X^{\mathcal S}=x^{\mathcal S}].
\label{eq:mmse}
\end{equation}
With full observations, this reduces to $\mathcal E(x^{\mathcal M})$, recovering encoding as a one-step prediction (Section~\ref{sec:shared-latent}). As in UNITE, the Gaussian query keeps the input format common to all observation levels; since the reconstruction target does not depend on it, the trained models effectively ignore it (Appendix~\ref{app:query-invariance}).

\paragraph{Partially observed examples.}
When $\mathcal A\subsetneq\mathcal M$, we cannot compute $\mathcal E(x^{\mathcal M})$. The available modalities nevertheless constrain its conditional distribution. For nested subsets $\mathcal S\subsetneq\mathcal A$, the tower property gives, at each fixed $t$,
\begin{equation}
\mathbb E\big[\mu_{\mathcal A}(W_t,t;X^{\mathcal A})\mid W_t,X^{\mathcal S}\big]
=\mu_{\mathcal S}(W_t,t;X^{\mathcal S}).
\label{eq:tower}
\end{equation}
A clean prediction using $x^{\mathcal A}$ can therefore supervise the prediction using $x^{\mathcal S}$, provided their common noisy state follows the full-latent intermediate distribution. We approximate it by integrating $\mathcal E_\theta$ conditioned on all available modalities up to the sampled time, using $N$ Euler steps with $N=4$ for training efficiency:
\begin{equation}
\frac{\mathrm{d}w_u^{\mathcal A}}{\mathrm{d}u}
=v_\theta(w_u^{\mathcal A},u;x^{\mathcal A}),\qquad
w_0^{\mathcal A}=w_0,\qquad 0\le u\le t.
\label{eq:teacher-path}
\end{equation}
At the resulting state, the same model serves as the teacher, conditioned on $x^{\mathcal A}$, and as the student, conditioned on $x^{\mathcal S}$, giving the unweighted distillation objective
\begin{equation}
\widetilde{\mathcal L}_{\mathrm{dist}}
=\mathbb E\left[
\left\|
\mathcal E_\theta(\sg(W_t^{\mathcal A}),t;X^{\mathcal S})
-\sg\left(
\mathcal E_\theta(W_t^{\mathcal A},t;X^{\mathcal A})
\right)
\right\|_2^2
\right].
\label{eq:distillation}
\end{equation}
For a complete example, the teacher target is its encoding $w$ and the state is the analytic interpolant in Equation~\ref{eq:path}, recovering $\widetilde{\mathcal L}_{\mathrm{CFM}}$.

\begin{proposition}[Gradient equivalence]\label{prop}
Let $\mathcal S\subsetneq\mathcal A$, let $x^{\mathcal A}$ be drawn from $P_{\mathrm{data}}(\mathrm{d}x^{\mathcal A})$, and treat the target latent distribution and teacher as fixed. If the teacher predicts $\mu_{\mathcal A}$ and its trajectory reproduces the conditional time marginals of $W_t$, 
then $\widetilde{\mathcal L}_{\mathrm{dist}}$ and $\widetilde{\mathcal L}_{\mathrm{CFM}}$ yield identical expected gradients through the subset-conditioned prediction.
\end{proposition}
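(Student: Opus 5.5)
We would prove this by a direct gradient computation, conditioning on the student's inputs $(W_t,t,X^{\mathcal S})$ and applying the tower property in Equation~\ref{eq:tower}. Fix $t$ and write $f_\theta=\mathcal E_\theta(W_t,t;X^{\mathcal S})$ for the subset-conditioned prediction. The teacher and the target latent distribution are fixed, so the only $\theta$-dependence in either loss is through $f_\theta$, and the per-sample gradients are
\begin{equation*}
\nabla_\theta\widetilde{\mathcal L}_{\mathrm{CFM}}:\ 2\,J_\theta^\top\big(f_\theta-W\big),\qquad
\nabla_\theta\widetilde{\mathcal L}_{\mathrm{dist}}:\ 2\,J_\theta^\top\big(f_\theta-\mu_{\mathcal A}(W_t,t;X^{\mathcal A})\big),
\end{equation*}
where $J_\theta=\partial f_\theta/\partial\theta$. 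By hypothesis the teacher output equals $\mu_{\mathcal A}$. In the distillation loss the state is $W_t^{\mathcal A}$ and not $W_t$, so the first step is to justify writing both expectations over the same random variable.

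\textbf{Step 1 (matching the joint law of the state).} We need the joint law of $(W_t^{\mathcal A},X^{\mathcal A})$ to equal the joint law of $(W_t,X^{\mathcal A})$ under the full-target construction in Equation~\ref{eq:path}. The hypothesis that the teacher trajectory in Equation~\ref{eq:teacher-path} reproduces the conditional time marginals says exactly that $W_t^{\mathcal A}\mid X^{\mathcal A}=x^{\mathcal A}$ has the same law as $W_t\mid X^{\mathcal A}=x^{\mathcal A}$. Combined with $x^{\mathcal A}\sim P_{\mathrm{data}}$, this gives equality of the joint laws. This is the usual marginal-preservation property of the flow generated by the conditional-expectation velocity $(\mu_{\mathcal A}-w)/(1-t)$, which we take as given rather than reprove. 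The distillation gradient then depends only on $(W_t^{\mathcal A},X^{\mathcal A})$, and $X^{\mathcal S}$ is a function of $X^{\mathcal A}$. Hence its expectation can be rewritten as $\mathbb E[2J_\theta^\top(f_\theta-\mu_{\mathcal A}(W_t,t;X^{\mathcal A}))]$ with $W_t$ the true interpolant.

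\textbf{Step 2 (tower property).} Both $J_\theta$ and $f_\theta$ are measurable with respect to $(W_t,X^{\mathcal S})$. Condition on $(W_t,X^{\mathcal S})$ and apply Equation~\ref{eq:tower}: $\mathbb E[\mu_{\mathcal A}\mid W_t,X^{\mathcal S}]=\mu_{\mathcal S}=\mathbb E[W\mid W_t,X^{\mathcal S}]$. The second equality holds because $\mathcal S\subsetneq\mathcal A\subseteq\mathcal M$ gives a nested filtration, so Equation~\ref{eq:tower} follows from $\mu_{\mathcal A}=\mathbb E[W\mid W_t,X^{\mathcal A}]$. Therefore
\begin{equation*}
\mathbb E\big[J_\theta^\top(f_\theta-\mu_{\mathcal A})\big]=\mathbb E\big[J_\theta^\top(f_\theta-\mu_{\mathcal S})\big]=\mathbb E\big[J_\theta^\top(f_\theta-W)\big].
\end{equation*}
Integrating over $t\sim\mathcal U[0,1)$ and over the subset-sampling distribution gives identical expected gradients. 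The two losses themselves differ only by a $\theta$-independent variance term $\mathbb E\|W-\mu_{\mathcal A}\|^2$.

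The main obstacle is Step 1. One must state carefully that "reproduces the conditional time marginals" means the joint law with $X^{\mathcal A}$ matches, and not just the unconditional marginal, and that the CFM expectation is over complete data whose $\mathcal A$-marginal equals the $P_{\mathrm{data}}(\mathrm{d}x^{\mathcal A})$ used for partial examples. A further subtlety is that the CFM pair $(W,W_t)$ also depends on $X^{\mathcal M\setminus\mathcal A}$. This dependence is harmless: Step 2 conditions only on $(W_t,X^{\mathcal S})$, and $\mu_{\mathcal A}$ already integrates out the unobserved modalities. A last technical point is exchanging gradient and expectation, which we would handle with a standard integrability assumption on $J_\theta$ and on the second moments of $W$.
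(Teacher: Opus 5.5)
Your proof is correct. It rests on the same two ingredients as the paper's proof: the teacher-marginal hypothesis, which moves the distillation expectation from $(W_t^{\mathcal A},X^{\mathcal A})$ onto $(W_t,X^{\mathcal A})$, and the orthogonality of conditional expectation. The difference is the level at which you work.

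The paper stays at the level of the losses. It expands $\mathbb E\|f_\theta-W\|_2^2$ around $\mu_{\mathcal A}$ and kills the cross term by conditioning on $(W_t,X^{\mathcal A})$. This gives $\widetilde{\mathcal L}_{\mathrm{CFM}}-\widetilde{\mathcal L}_{\mathrm{dist}}=\mathbb E\|W-\mu_{\mathcal A}\|_2^2$, which is independent of $\theta$. Gradient equality follows, and the tower property enters only afterwards, to identify the common minimizer $\mu_{\mathcal S}$.

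You differentiate first and compare the expected per-sample gradients $2J_\theta^\top(f_\theta-W)$ and $2J_\theta^\top(f_\theta-\mu_{\mathcal A})$. Each route buys something different:
\begin{itemize}
\item The paper's identity is stronger: the objectives agree up to a constant, so their minimizers and all $\theta$-derivatives agree. Your closing remark states this constant gap, but your argument does not derive it.
\item Your version matches the statement's ``expected gradients'' literally. Because you compare $\mathbb E[\nabla_\theta\ell]$ directly, you need only integrability of $J_\theta^\top f_\theta$, $J_\theta^\top W$ and $J_\theta^\top\mu_{\mathcal A}$. So you do not need the gradient--expectation interchange you flag at the end. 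It is the loss-level route that needs it, to pass from $\nabla_\theta$ of the objectives to expected per-sample gradients.
\end{itemize}

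Two small remarks. First, your detour through $\mu_{\mathcal S}$ is unnecessary. $J_\theta$ and $f_\theta$ are also functions of $(W_t,X^{\mathcal A})$, so conditioning on that finer information gives $\mathbb E[J_\theta^\top W]=\mathbb E[J_\theta^\top\mu_{\mathcal A}]$ in one step. This is exactly the paper's cross-term cancellation. Second, in Step~2 the justifications are swapped. $\mu_{\mathcal S}=\mathbb E[W\mid W_t,X^{\mathcal S}]$ is the definition of $\mu_{\mathcal S}$, while the nesting $\mathcal S\subsetneq\mathcal A$ is what yields $\mathbb E[\mu_{\mathcal A}\mid W_t,X^{\mathcal S}]=\mu_{\mathcal S}$.
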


The proof follows from the orthogonality of conditional expectation (Appendix~\ref{app:gradient-equivalence}). While the equivalence holds at a fixed representation with an exact teacher, in practice the current model serves as the teacher and its conditional flow is integrated numerically.

\headpre
\subsection{Contrastive Alignment across Observation Subsets}
\label{sec:contrastive}
\headpost

Reconstruction and denoising train $\mathcal E$ to retain cross-modal information and to model its conditional distributions, but neither directly shapes the latent geometry to reflect semantic similarity. We therefore add an auxiliary contrastive objective that promotes semantic alignment across observation subsets. For a batch of $B$ examples with available set $\mathcal A$, choose a nonempty proper subset $\mathcal S\subsetneq\mathcal A$ and its complement $\mathcal C=\mathcal A\setminus\mathcal S$. Let $r_i^{\mathcal S}$ and $r_i^{\mathcal C}$ be the flattened, $\ell_2$-normalized outputs of $\mathcal E_\theta(\epsilon_i^{\mathcal S},0;x_i^{\mathcal S})$ and $\mathcal E_\theta(\epsilon_i^{\mathcal C},0;x_i^{\mathcal C})$, using independent Gaussian queries. With temperature $\tau_c>0$, the directional InfoNCE loss~\citep{oord2018cpc} is
\begin{equation}
\mathcal L_{\mathcal S\to\mathcal C}
=-\frac1B\sum_{i=1}^B
\log\frac{\exp((r_i^{\mathcal S})^\top r_i^{\mathcal C}/\tau_c)}
{\sum_{j=1}^B\exp((r_i^{\mathcal S})^\top r_j^{\mathcal C}/\tau_c)}.
\label{eq:contrastive}
\end{equation}
We average the two directions, $\mathcal L_{\mathrm{con}}=(\mathcal L_{\mathcal S\to\mathcal C}+\mathcal L_{\mathcal C\to\mathcal S})/2$. As in Section~\ref{sec:reconstruction}, these one-step predictions from partial observations are not samples of the target latent distribution, but because they share the backbone parameters, aligning them stabilizes the internal representation of $\mathcal E$.

\begin{figure}[t]
\centering
\begin{tikzpicture}[
  x=1cm,y=1cm,
  font=\fontsize{8}{9.2}\selectfont,
  every node/.style={inner sep=1.5pt,outer sep=0pt},
  title/.style={anchor=west,font=\fontsize{9}{10.5}\selectfont\bfseries,inner sep=0pt},
  label/.style={font=\fontsize{8}{9.2}\selectfont,align=center},
  small/.style={font=\fontsize{8}{9.2}\selectfont,align=center},
  obs/.style={circle,draw=black!68,fill=white,minimum size=7.5mm,inner sep=.5pt},
  module/.style={rectangle,rounded corners=1pt,draw=latentblue,fill=latentfill,
    line width=.55pt,align=center,inner sep=3pt,minimum height=6.3mm},
  film/.style={module,minimum width=14.400mm,minimum height=4.4mm,inner sep=1.5pt},
  arrow/.style={-{Latex[length=1.6mm,width=1.15mm]},line width=.55pt},
  kv/.style={arrow,draw=latentblue},
  wire/.style={line width=.55pt},
  stop/.style={postaction={decorate},decoration={markings,
    mark=at position .42 with {
      \draw[-,draw=latentblue,line width=.65pt] (-2pt,-2.5pt)--(0pt,2.5pt);
      \draw[-,draw=latentblue,line width=.65pt] (0pt,-2.5pt)--(2pt,2.5pt);
    }}}
]
\node[title] (titleA) at (.02,6.10) {(a) Shared inference network};
\node[title] (titleB) at (4.35,6.10) {(b) Transformer block};
\draw[black!19,line width=.4pt] (4.00,.20)--(4.00,5.82);

\node[obs] (observation) at (.78,5.12) {$x^{\mathcal S}$};
\node[obs] (noise) at (2.700,5.12) {$w_t$};
\foreach \x in {-.30,0,.30} {
  \draw[draw=black!60,fill=white,line width=.45pt]
    ({.78+\x-.12},3.99) rectangle ++(.24,.25);
  \draw[draw=latentblue,fill=latentfill,line width=.45pt]
    ({2.700+\x-.12},3.99) rectangle ++(.24,.25);
  \draw[draw=latentblue,fill=latentfill,line width=.45pt]
    ({2.700+\x-.12},2.040) rectangle ++(.24,.25);
}
\draw[arrow] (observation.south)--(.78,4.26);
\draw[arrow] (noise.south)--(2.700,4.26);
\node[module,minimum width=28mm,minimum height=9mm] (transformer) at (1.80,3.120)
  {Transformer block $\times L$};
\draw[arrow] (.78,3.96)--(transformer.north -| .78,0);
\draw[arrow] (2.700,3.96)--(transformer.north -| 2.700,0);
\draw[arrow] (transformer.south -| 2.700,0)--(2.700,2.310);
\node[module,minimum width=22mm] (projection) at (2.700,1.370) {Output projection};
\draw[arrow] (2.700,2.010)--(projection.north);
\node[label] (prediction) at (2.700,.50) {$\mathcal E_\theta(w_t,t;x^{\mathcal S})$};
\draw[arrow] (projection.south)--(prediction.north);

\draw[draw=black!23,line width=.4pt,rounded corners=2pt]
  (4.43,.55) rectangle (13.72,5.78);
\node[label] (headN) at (5.65,5.48) {Modality $n$};
\node[label] (headR) at (8.96,5.48) {$K$ latent registers};
\node[label] (headM) at (12.27,5.48) {Modality $m$};
\foreach \cx/\tone in {5.65/white,8.96/latentfill,12.27/white} {
  \foreach \x in {-.30,0,.30} {
    \draw[draw=latentblue,fill=\tone,line width=.45pt]
      ({\cx+\x-.12},4.86) rectangle ++(.24,.25);
    \draw[draw=latentblue,fill=\tone,line width=.45pt]
      ({\cx+\x-.12},0.788) rectangle ++(.24,.25);
  }
}
\node[module,minimum width=21mm] (selfN) at (5.65,3.048) {Self-attention};
\node[module,minimum width=24mm] (latentAttention) at (8.96,3.048) {Latent attention};
\node[module,minimum width=21mm] (selfM) at (12.27,3.048) {Self-attention};
\node[module,minimum width=21mm,minimum height=5.5mm] (ffnN) at (5.65,1.548) {SwiGLU};
\node[module,minimum width=24mm,minimum height=5.5mm] (ffnR) at (8.96,1.548) {SwiGLU};
\node[module,minimum width=21mm,minimum height=5.5mm] (ffnM) at (12.27,1.548) {SwiGLU};
\node[film] (filmAttention) at (8.96,4.405) {FiLM};
\node[film] (filmFFN) at (8.96,2.278) {FiLM};
\node[small] (timeAttention) at (10.150,4.405) {$t$};
\node[small] (timeFFN) at (10.150,2.278) {$t$};
\draw[arrow] (timeAttention.west)--(filmAttention.east);
\draw[arrow] (timeFFN.west)--(filmFFN.east);
\node[small] (qkvN) at (5.65,4.440) {$Q,K,V$};
\node[small] (qkvR) at (8.96,3.774) {$Q,K,V$};
\node[small] (qkvM) at (12.27,4.440) {$Q,K,V$};
\draw[arrow] (5.65,4.86)--(qkvN.north);
\draw[arrow] (8.96,4.86)--(filmAttention.north);
\draw[arrow] (filmAttention.south)--(qkvR.north);
\draw[arrow] (12.27,4.86)--(qkvM.north);
\draw[arrow] (qkvN.south)--(selfN.north);
\draw[arrow] (qkvR.south)--(latentAttention.north);
\draw[arrow] (qkvM.south)--(selfM.north);
\coordinate (tapN) at (5.65,4.080);
\coordinate (tapM) at (12.27,4.080);
\fill (tapN) circle[radius=.75pt];
\fill (tapM) circle[radius=.75pt];
\draw[kv,rounded corners=2.5pt]
  (tapN)--(8.22,4.080)--([xshift=-7.4mm]latentAttention.north);
\draw[kv,stop,rounded corners=2.5pt]
  (tapM)--(9.70,4.080)--([xshift=7.4mm]latentAttention.north);
\node[small] (kvN) at (7.02,4.380) {$K_n,V_n$};
\node[small] (kvM) at (10.91,4.380) {$K_m,V_m$};
\draw[arrow] (selfN.south)--(ffnN.north);
\draw[arrow] (latentAttention.south)--(filmFFN.north);
\draw[arrow] (filmFFN.south)--(ffnR.north);
\draw[arrow] (selfM.south)--(ffnM.north);
\draw[arrow] (ffnN.south)--(5.65,1.058);
\draw[arrow] (ffnR.south)--(8.96,1.058);
\draw[arrow] (ffnM.south)--(12.27,1.058);

\draw[wire,draw=latentblue,stop] (4.64,.20)--(5.30,.20);
\node[small,anchor=west] (stopLegend) at (5.48,.20)
  {Stop target K/V gradients during reconstruction};
\path[use as bounding box] (0,0) rectangle (13.80,6.30);
\end{tikzpicture}
\caption{\textbf{Latent inference architecture.}
A shared Transformer performs latent inference under arbitrary observation subsets by aggregating modality information in latent registers. Keeping modality streams separate allows reconstruction gradients to be stopped on the target's key/value paths to the registers (slashes), while retaining its information in the forward pass.}
\label{fig:architecture}
\end{figure}

\headpre
\subsection{Model Architecture}
\label{sec:architecture}
\headpost

We implement $\mathcal E$ as a shared Transformer that predicts a clean latent from a latent state and an arbitrary subset of observed modalities.
Figure~\ref{fig:architecture} illustrates the architecture. Observed modalities are represented as separate token streams, while $K$ latent registers receive the Gaussian query or noisy latent state and aggregate information from the observed streams at every block. The final register states are projected to the clean latent prediction.

Within each block, modality streams attend only within themselves, whereas the latent registers attend to all observed streams and to one another, following the latent-attention construction of Perceiver~\citep{jaegle2021perceiver}. This separation prevents direct mixing between modality streams and allows their features to be reused across denoising steps.

For reconstruction target $m$, we detach its projected keys and values on every modality-to-register attention edge. The forward computation is unchanged, but reconstruction gradients can no longer flow from the registers into the target stream. Since modality streams do not attend to one another, no other modality offers an indirect path. This realizes the target-detached reconstruction of Section~\ref{sec:reconstruction} without target-specific encoders. Appendix~\ref{app:architecture} gives further implementation details.

\headpre
\subsection{Training and Inference}
\label{sec:training-generation}
\headpost

We train $\mathcal E$ and the modality-specific generative decoders with
\begin{equation}
\mathcal L
=\lambda_{\mathrm{rec}}\mathcal L_{\mathrm{rec}}
+\lambda_{\mathrm{dist}}\widetilde{\mathcal L}_{\mathrm{dist}}
+\lambda_{\mathrm{con}}\mathcal L_{\mathrm{con}}.
\label{eq:objective}
\end{equation}
The reconstruction loss updates $\mathcal E$ and the decoder of the sampled target, while the distillation and contrastive losses update only $\mathcal E$.
Algorithm~\ref{alg:training} in Appendix~\ref{app:subset-sampling} gives the full training procedure.

At inference, one latent drawn by integrating $v_\theta$ from Gaussian noise is shared by all target decoders, as the factorization in Equation~\ref{eq:conditional-joint} requires.

\headpre
\subsection{Relation to Multimodal Latent-Variable Models}
\label{sec:relation-to-multimodal}
\headpost

Among multimodal VAEs, which likewise decode modalities independently from a shared latent~\citep{wu2018mvae,shi2019mmvae}, JNF~\citep{senellart2025inference} is particularly close to our formulation: it first learns a shared latent with a joint VAE trained on complete observations, then fits a separate normalizing-flow posterior for each modality to samples from the joint encoder. Writing these posteriors as $q_m(z\mid x^m)$, inference from a subset $\mathcal S$ is based on their product,
\begin{equation}
q_{\mathrm{JNF}}(z\mid x^{\mathcal S})
\propto
p(z)^{1-|\mathcal S|}
\prod_{m\in\mathcal S}q_m(z\mid x^m),
\end{equation}
from which JNF samples using HMC.

MUNITE instead directly models $Q_{\mathcal E}(\mathrm{d}w\mid x^{\mathcal S})$, the conditional distribution of the same full-observation latent for every $\mathcal S$, without composing modality-specific posteriors.
Whereas JNF builds on a joint encoder trained on complete observations, MUNITE can also learn these conditionals from partially observed examples: the prediction under richer available evidence serves as a self-distillation target, justified by Equation~\ref{eq:tower} and Proposition~\ref{prop}.

MUNI~\citep{yeo2026muni} shares our use of factorized generative decoders, target-detached reconstruction, and a single latent sample for all targets, but draws this sample from an aggregated posterior $q_{\phi,\mathcal S}(z\mid x^{\mathcal S})$ for an observed subset. The central condition of its predictive-sufficiency criterion is
\begin{equation}
I(X^{\mathcal S};X^m\mid Z_{\mathcal S})=0
\qquad
\forall m\notin\mathcal S,
\end{equation}
which guarantees target-wise sufficiency but does not imply conditional independence among multiple missing targets. MUNITE instead samples $W\sim Q_{\mathcal E}(\cdot\mid x^{\mathcal S})$, where the full-observation latent $W$ is trained to satisfy Equation~\ref{eq:sufficiency}. The missing variation shared across targets is therefore sampled once in $W$, and marginalizing this latent yields the conditional joint in Equation~\ref{eq:conditional-joint}. We formalize this distinction in Appendix~\ref{app:conditional-coherence}.

\headpre
\section{Related Work}
\headpost

Since Section~\ref{sec:shared-latent} covers latent diffusion, Unified Latents, and UNITE~\citep{rombach2022ldm,heek2026ul,duggal2026unite}, we focus here on multimodal generation.

\paragraph{Any-to-any generation.}
Any-to-any generation extends cross-modal synthesis beyond fixed modality pairs to arbitrary combinations of observed and target modalities~\citep{tang2023codi,bachmann2024fourm21}. Masked models such as 4M and 4M-21 learn these mappings by predicting target-modality tokens from observed-modality tokens with a shared Transformer~\citep{mizrahi2023fourm,bachmann2024fourm21}. Joint flow models instead couple modality trajectories during sampling, using continuous representations in OmniFlow and discrete tokens in NExT-OMNI~\citep{li2025omniflow,luo2025nextomni}; our CFM and DFM baselines implement these two approaches in a controlled setting (Section~\ref{sec:experiments}).
CoDi similarly couples modality-specific denoisers through cross-attention to aligned representations of the other modalities' noisy latents~\citep{tang2023codi}.

\paragraph{Shared-latent generative models.}
Multimodal VAEs model multiple modalities through a shared latent variable and modality-specific encoders and decoders~\citep{wu2018mvae,shi2019mmvae,sutter2021mopoe,palumbo2023mmvaeplus,vo2026helvae}. Building on this factorization, MUNI jointly trains modality-specific encoders, expressive generative decoders, and a shared flow prior, and introduces coherence sufficiency, predictive sufficiency, and minimality as criteria for the learned multimodal latent~\citep{yeo2026muni}. FlowBind takes a different latent-space approach, using bidirectional flows to connect modality representations through a common latent distribution learned from partially paired data~\citep{cha2026flowbind}.
MUNITE instead directly models the conditional distribution of a coherence-sufficient full-observation latent under arbitrary observed subsets, replacing the diagonal-Gaussian parameterization of subset posteriors with conditional transport.

\headpre
\section{Experiments}
\label{sec:experiments}
\headpost
We evaluate one-to-one, many-to-one, one-to-many, and many-to-many generation, as well as unconditional co-generation.
Our central question is whether jointly generated targets agree on content that their inputs leave unspecified.
We adapt the PolyMNIST--digit--quadrant (PolyMNIST-D-Q) and image--text--audio benchmarks of MUNI~\citep{yeo2026muni} and introduce a new FFHQ64 benchmark with three image-like and two categorical modalities.

\headpre
\subsection{Experimental setup}
\headpost
We compare MUNITE with MUNI~\citep{yeo2026muni} and joint continuous and discrete flow-matching baselines, CFM and DFM~\citep{lipman2023flowmatching,gat2024dfm}. We train MUNI with its released training and sampling pipelines and retain its modality-specific decoder architectures in MUNITE.
To compare shared-latent and joint generation, CFM and DFM implement the continuous and discrete flow-matching approaches used in OmniFlow and NExT-OMNI, respectively~\citep{li2025omniflow,luo2025nextomni}. Both baselines use the same Transformer blocks as MUNITE but apply joint attention across all modality tokens. Since they do not use generative decoders, we increase the backbone depth to approximately match MUNITE's total number of trainable parameters. Because fully Transformer-based joint models may converge more slowly than latent models with generative decoders, we also train CFM and DFM for twice as many epochs as MUNITE (Appendix~\ref{app:baseline-controls}). CFM represents categorical modalities with one-hot vectors, whereas DFM discretizes continuous modalities with separately trained, frozen VQ-VAEs~\citep{oord2017vqvae}.
All four methods share the same training data, evaluation protocol, and frozen evaluators.

For the image--text--audio task, we additionally compare against FlowBind~\citep{cha2026flowbind} and pretrained CoDi and OmniFlow~\citep{tang2023codi,li2025omniflow}. Because CoDi and OmniFlow are pretrained on substantially larger datasets, we use their released models without retraining, take their single-target and unconditional scores from MUNI~\citep{yeo2026muni}, and evaluate their one-to-many coherence ourselves (Appendix~\ref{app:external-baselines}). The released FlowBind checkpoint is trained on less data than our benchmark, so we retrain it on our training data with its official pipeline.

The training data mix fully and partially observed examples (Table~\ref{tab:training-data} in Appendix~\ref{app:protocol}). For PolyMNIST-D-Q and FFHQ64, we mask one or two of the three non-categorical modalities in three quarters of the training examples and keep the categorical labels. For image--text--audio, the corpora used by MUNI and FlowBind contain only modality pairs, so we add caption--image--audio triplets from E-MM1~\citep{broadbent2025ebind}. For evaluation, we use 10,000 complete PolyMNIST tuples, 5,000 complete FFHQ tuples, and the image--text--audio evaluation sets of MUNI~\citep{yeo2026muni}.
Appendix~\ref{app:qualitative} shows qualitative results.

\headpre
\subsection{PolyMNIST-D-Q}
\headpost
\begin{table}[t]
\caption{Results on PolyMNIST-D-Q (top) and FFHQ64 (bottom). Acc. and Coh. denote accuracy and coherence, $d$ and $q$ denote digit and quadrant, and scores for $m_i$ are averaged over the three views. FD uses verifier features, and Normal Err. is $1-\cos$. Best results are in bold.}
\label{tab:image-main}\centering\small
\setlength{\tabcolsep}{3pt}
\begin{tabularx}{\linewidth}{@{}llYrrrr@{}}\toprule
Route & Input $\to$ output & Metric & MUNI & CFM & DFM & MUNITE\\\midrule
\multicolumn{7}{c}{\textit{PolyMNIST-D-Q}}\\
\multirow{3}{*}{Uncond.} & \multirow{3}{*}{All five modalities} & FD $\downarrow$ & 3.0407 & 71.6626 & 19.9587 & \textbf{1.5230}\\
& & Digit Coh. (all) $\uparrow$ & 0.5799 & 0.0237 & 0.2973 & \textbf{0.9826}\\
& & Quad. Coh. (all) $\uparrow$ & 0.9041 & 0.9902 & 0.8222 & \textbf{1.0000}\\\midrule
\multirow{2}{*}{$1\to1$}
& $d\to m_i$ & Digit Acc. $\uparrow$ & 0.9843 & 0.2402 & 0.7224 & \textbf{0.9941}\\
& $q\to m_i$ & Quad. Acc. $\uparrow$ & \textbf{1.0000} & 0.9936 & 0.9372 & \textbf{1.0000}\\\midrule
$N\to1$ & $(d,q)\to m_i$ & Both-label Acc. $\uparrow$ & 0.9925 & 0.2200 & 0.7325 & \textbf{0.9943}\\\midrule
\multirow{2}{*}{$1\to N$} & $d\to(m_0,m_1,m_2)$ & Quad. Coh. $\uparrow$ & 0.2528 & 0.9879 & 0.8798 & \textbf{1.0000}\\
& $q\to(m_0,m_1,m_2)$ & Digit Coh. $\uparrow$ & 0.1074 & 0.1597 & 0.4612 & \textbf{0.9904}\\\midrule
\multicolumn{7}{c}{\textit{FFHQ64}}\\
\multirow{5}{*}{Uncond.} & \multirow{5}{*}{All five modalities} & RGB FID $\downarrow$ & 34.618 & 122.617 & 55.840 & \textbf{29.918}\\
& & Age Coh. $\uparrow$ & 0.5630 & 0.3068 & 0.4546 & \textbf{0.5692}\\
& & Gender Coh. $\uparrow$ & 0.9596 & 0.7328 & 0.8658 & \textbf{0.9610}\\
& & RGB--Seg. Coh. $\uparrow$ & 0.7762 & 0.7357 & 0.8084 & \textbf{0.8896}\\
& & RGB--Normal Err. $\downarrow$ & 0.0700 & 0.0603 & 0.0495 & \textbf{0.0491}\\\midrule
\multirow{2}{*}{$1\to1$} & Age $\to$ RGB & Age Acc. $\uparrow$ & 0.5204 & 0.2506 & 0.4002 & \textbf{0.6274}\\
& Gender $\to$ RGB & Gender Acc. $\uparrow$ & 0.9684 & 0.6944 & 0.8518 & \textbf{0.9790}\\\midrule
$N\to1$ & (Age, Gender) $\to$ RGB & Both Acc. $\uparrow$ & 0.5100 & 0.1842 & 0.3690 & \textbf{0.5898}\\\midrule
\multirow{2}{*}{$M\to N$} & \multirow{2}{*}{\shortstack[l]{(Age, Gender) $\to$\\(RGB, Seg., Normals)}} & RGB--Seg. Coh. $\uparrow$ & 0.6038 & 0.7441 & 0.8142 & \textbf{0.8888}\\
& & RGB--Normal Err. $\downarrow$ & 0.1106 & 0.0572 & \textbf{0.0492} & 0.0500\\\bottomrule
\end{tabularx}
\end{table}

The three $64\times64$ RGB views share digit identity and quadrant but differ in handwriting and background. Conditioning on either label leaves the other attribute unobserved, so joint generation requires the three views to agree on that missing attribute. For unconditional co-generation, coherence measures agreement among the three images and the jointly generated labels. Following MUNI~\citep{yeo2026muni}, we compute digit and quadrant accuracy and coherence using a frozen SegFormer verifier (Appendix~\ref{app:evaluators}), while Fr\'echet distance (FD) in verifier-feature space measures image fidelity.

MUNITE retains strong single-image label accuracy while substantially improving agreement on the unobserved attribute (Table~\ref{tab:image-main}, top). Under quadrant conditioning, MUNI and MUNITE both reach 1.0000 quadrant accuracy for single images, yet the digit coherence of their jointly generated views is 0.1074 and 0.9904, respectively. Under digit conditioning, quadrant coherence likewise rises from 0.2528 to 1.0000. MUNI's scores are near the chance agreement of independent views, 0.10 and 0.25, so accurate single-view conditionals alone do not make the views agree (Section~\ref{sec:relation-to-multimodal}). Each MUNITE view still follows the data distribution of the missing attribute, and the views agree only by chance when decoded from separate latent samples (Appendix~\ref{app:poly-results}).

MUNITE also achieves the lowest FD and the highest unconditional coherence for both labels. CFM and DFM attain high quadrant agreement on some routes, yet lag behind both latent models in digit accuracy and image fidelity despite matched trainable parameters and twice as many training epochs.
Their joint backbones must model the handwriting and background of each view together with the shared attributes, which both latent models leave to image-specific decoders.

\headpre
\subsection{FFHQ64}
\headpost
We construct a five-modality task from FFHQ~\citep{karras2019stylegan}, combining $64\times64$ RGB faces with the age, gender, and segmentation annotations of FFHQ-Aging~\citep{orel2020lifespan} and pseudo-labeled surface normals.
We evaluate RGB generation from either or both categorical attributes, joint generation of RGB, segmentation, and normals from both, and unconditional co-generation of all five modalities.
Attribute accuracy measures agreement with the observed labels, and coherence measures agreement among jointly generated modalities. RGB FID measures image fidelity, and normal error the geometric consistency between generated normals and RGB.

On FFHQ64, the gains extend from shared labels to dense cross-modal structure (Table~\ref{tab:image-main}, bottom). When conditioning on age and gender, MUNITE increases RGB--segmentation coherence from 0.6038 to 0.8888 and reduces normal error from 0.1106 to 0.0500 relative to MUNI. DFM attains a slightly lower normal error (0.0492 vs.~0.0500), but MUNITE attains much higher segmentation coherence (0.8888 vs.~0.8142).
MUNITE also has the lowest unconditional RGB FID and the strongest unconditional coherence on all four cross-modal comparisons. CFM and DFM remain substantially weaker in RGB fidelity, as in PolyMNIST-D-Q.

\headpre
\subsection{Image--text--audio}
\label{sec:exp-media}
\headpost
We follow the image--text--audio benchmark used by FlowBind and MUNI~\citep{cha2026flowbind,yeo2026muni}, modeling fixed 768-dimensional features and evaluating the rendered images, captions, and audio.
For one-to-one generation, we use FID, FAD, and CIDEr for fidelity and CLIP, CLAP, and Audio-Image Similarity (AIS) for cross-modal alignment.
For many-to-one generation, we measure the alignment of the generated target with each source modality, and for one-to-many and unconditional co-generation, the alignment among jointly generated modalities, which we report as coherence.
\begin{table}[htbp]
\caption{Image--text--audio coherence between jointly generated modalities. $T$, $I$, and $A$ denote text, image, and audio. OmniFlow's unconditional scores are from \citet[Table~4]{yeo2026muni}, and dashes denote unsupported unconditional generation. Higher is better, and best results are in bold.}
\label{tab:media-coherence}\centering\small
\setlength{\tabcolsep}{4pt}
\begin{tabularx}{\linewidth}{@{}l*{6}{>{\centering\arraybackslash}X}@{}}\toprule
& \multicolumn{3}{c}{$1\to N$ coherence} & \multicolumn{3}{c}{Unconditional coherence}\\\cmidrule(lr){2-4}\cmidrule(l){5-7}
Method & $T\to(I,A)$ & $I\to(T,A)$ & $A\to(T,I)$ & CLIP & CLAP & AIS\\
& AIS & CLAP & CLIP & $(T,I)$ & $(T,A)$ & $(I,A)$\\\midrule
CoDi & 63.866 & 7.418 & 23.414 & \textemdash & \textemdash & \textemdash\\
OmniFlow & 77.027 & 14.781 & 22.697 & 21.17 & 14.23 & 50.95\\
\midrule
MUNI & 81.558 & 32.237 & 25.850 & 26.949 & 28.467 & 81.989\\
CFM & 74.702 & 21.695 & 24.034 & 23.622 & 19.896 & 74.208\\
DFM & 73.635 & 20.951 & 24.037 & 23.740 & 19.503 & 71.942\\
FlowBind & 79.819 & 25.812 & 24.853 & \textemdash & \textemdash & \textemdash\\
MUNITE & \textbf{83.336} & \textbf{34.762} & \textbf{26.334} & \textbf{27.398} & \textbf{30.220} & \textbf{87.045}\\
\bottomrule\end{tabularx}
\end{table}

MUNITE achieves the highest coherence in all three one-to-many directions and all three unconditional comparisons (Table~\ref{tab:media-coherence}). The improvement holds across text, image, and audio, indicating that the shared latent preserves dependencies among generated targets when their shared variation is not determined by the input. Ablating target detaching, contrastive alignment, or partial-data distillation leaves this one-to-many advantage over MUNI intact, whereas unconditional coherence relies most on target detaching and partial-data distillation (Appendix~\ref{app:media-ablations}). On the one-to-one and many-to-one routes, MUNITE is competitive with the compared baselines, including the externally pretrained models (Appendix~\ref{app:media-single-results}).

\headpre
\section{Conclusion and Future Work}
\headpost

MUNITE unifies encoding and any-to-any multimodal generation as inference over a full-observation latent. Self-distillation extends the learning of its conditional distributions to incomplete training examples, and modality-specific decoders generate from a shared latent sample. Across the evaluated benchmarks, MUNITE improves coherence among jointly generated modalities while maintaining competitive single-target generation quality.

A natural next step is multimodal world modeling.
The compact latent learned by MUNITE could serve as a shared world-state representation that can be inferred from arbitrary subsets of observations. Learning dynamics directly in this latent space would enable state estimation and future prediction under varying multimodal observations, while modality-specific decoders could render the predicted state into any desired modality.

\label{main-end}
\clearpage
\section*{Ethics statement}
\vspace{-0.5\baselineskip}
Our experiments include face and audiovisual generation, which may reproduce biases present in the underlying datasets and can be misused to create deceptive synthetic media. In the FFHQ64 benchmark, age and gender are used only as dataset annotations for controlled evaluation and should not be interpreted as validated personal attributes. Generated examples should therefore be interpreted as synthetic outputs rather than representations of real individuals or verified attributes.
\section*{Reproducibility statement}
\vspace{-0.5\baselineskip}
Section~\ref{sec:method} specifies the learning objectives, gradient routing, and inference procedure, and Appendix~\ref{app:gradient-equivalence} proves Proposition~\ref{prop}. Appendix~\ref{app:implementation} describes the architectures, the sampling of training subsets, and the training update in Algorithm~\ref{alg:training}. Appendix~\ref{app:protocol} details how each benchmark and its observation groups are constructed, together with model sizes, optimization and inference budgets, and the evaluators. Appendix~\ref{app:metrics} defines all metrics and gives the sources of the external baseline results.

\bibliography{iclr2027_conference}
\bibliographystyle{iclr2027_conference}
\clearpage
\appendix
\section*{Appendix}
\raggedbottom
\section{Full-observation latents and conditional generation}
\label{app:theory}
This section expands on the formulation of Section~\ref{sec:shared-latent}. We first characterize the latent conditionals induced by the full-observation encoder, then derive the conditional joint in Equation~\ref{eq:conditional-joint} and contrast it with the target-wise criterion of MUNI.

\subsection{The distribution induced by full observations}
For a fixed representation, $W=\mathcal E(X^{\mathcal M})$, so the conditional distributions in Section~\ref{sec:shared-latent} are pushforwards of the data conditionals through the full-observation encoder:
\begin{equation}
Q_{\mathcal E}(\cdot\mid x^{\mathcal S})
=\mathcal E_{\#}\big[P_{\mathrm{data}}(\cdot\mid x^{\mathcal S})\big].
\label{eq:app-pushforward}
\end{equation}
With full observation this is $\delta_{\mathcal E(x^{\mathcal M})}$, and empty conditioning gives the marginal distribution of $W$.

\paragraph{Query dependence.}\label{app:query-invariance}
As described in Section~\ref{sec:distillation}, the shared model also receives a Gaussian query under full observation, so the full-observation latent $\mathcal E_\theta(\epsilon,0;x^{\mathcal M})$ need not be exactly deterministic. We therefore measure its dependence on the query by the variance ratio
\begin{equation*}
r=\frac{\mathbb E_{x^{\mathcal M}}\operatorname{tr}\operatorname{Cov}_{\epsilon}\big[\mathcal E_\theta(\epsilon,0;x^{\mathcal M})\big]}
{\operatorname{tr}\operatorname{Cov}_{x^{\mathcal M}}\big[\mathbb E_{\epsilon}\,\mathcal E_\theta(\epsilon,0;x^{\mathcal M})\big]},
\end{equation*}
estimated from eight queries for each of 1,000 held-out complete observations (the 975 evaluation triplets for image--text--audio). The trained models give $r=2.6\times10^{-5}$ on PolyMNIST-D-Q, $7.9\times10^{-5}$ on FFHQ64, and $6.5\times10^{-4}$ on image--text--audio, so the full-observation latent is deterministic for practical purposes.

\subsection{Derivation of the conditional joint}
The coherence-sufficiency condition in Equation~\ref{eq:sufficiency} makes all modalities conditionally independent given $W$. Hence, for disjoint input and target sets $\mathcal S$ and $\mathcal T$,
\begin{equation*}
P_{\mathrm{data}}(\mathrm{d}x^{\mathcal T}\mid W=w,x^{\mathcal S})
=\prod_{m\in\mathcal T}P_{\mathrm{data}}(\mathrm{d}x^m\mid W=w),
\end{equation*}
and integrating over the latent conditional in Equation~\ref{eq:app-pushforward} gives
\begin{equation}
\begin{aligned}
P_{\mathrm{data}}(\mathrm{d}x^{\mathcal T}\mid x^{\mathcal S})
&=\int P_{\mathrm{data}}(\mathrm{d}x^{\mathcal T}\mid W=w,x^{\mathcal S})\,Q_{\mathcal E}(\mathrm{d}w\mid x^{\mathcal S})\\
&=\int\prod_{m\in\mathcal T}P_{\mathrm{data}}(\mathrm{d}x^m\mid W=w)\,Q_{\mathcal E}(\mathrm{d}w\mid x^{\mathcal S}),
\end{aligned}
\label{eq:app-conditional-joint}
\end{equation}
which is Equation~\ref{eq:conditional-joint}. One sample of $w$ must therefore be shared by all target decoders. Sampling a separate latent for each target would instead produce $\prod_{m\in\mathcal T}P_{\mathrm{data}}(\mathrm{d}x^m\mid x^{\mathcal S})$, discarding the dependence that remains after observing the inputs. Taking $\mathcal S=\emptyset$ covers unconditional co-generation.

\subsection{Predictive sufficiency and joint coherence}
\label{app:conditional-coherence}
MUNI trains the latent of an observed subset toward predictive sufficiency~\citep{yeo2026muni},
\begin{equation}
\mathrm{TC}(X^{\mathcal S}\mid Z_{\mathcal S})=0,
\qquad I(X^{\mathcal S};X^m\mid Z_{\mathcal S})=0\quad\text{for all }m\notin\mathcal S.
\label{eq:app-muni-criterion}
\end{equation}
The first condition, on the total correlation (TC), concerns dependence within the observed subset, and the second preserves its information about each missing modality. Neither requires the missing targets to be independent given $Z_{\mathcal S}$, as factorized decoding would. A latent can therefore satisfy Equation~\ref{eq:app-muni-criterion} while factorized decoding misses the joint conditional.

Consider independent uniform bits $U$ and $B$ with $X^1=U$ and $X^2=X^3=(U,B)$, and let $\mathcal S=\{1\}$. The latent $Z_{\mathcal S}=U$ satisfies Equation~\ref{eq:app-muni-criterion}, and pairing $U$ with an independent uniform bit reproduces each conditional $P_{\mathrm{data}}(\mathrm{d}x^m\mid x^1)$ exactly. When the two targets are decoded independently in this way, however, they coincide only with probability $1/2$, whereas $X^2$ and $X^3$ always coincide in the data. This holds even though $I(X^1;(X^2,X^3)\mid Z_{\mathcal S})=0$, so preserving the source information about the joint target does not justify independent decoding; the latent must also resolve the dependence between the targets. The full-observation latent $W=(U,B)$ satisfies Equation~\ref{eq:sufficiency}, and its conditional distribution
\begin{equation*}
Q_{\mathcal E}(\cdot\mid x^1=u)=\tfrac12\delta_{(u,0)}+\tfrac12\delta_{(u,1)}
\end{equation*}
fixes the shared bit once, before either target is decoded, so Equation~\ref{eq:app-conditional-joint} recovers the correct joint conditional. MUNI imposes coherence sufficiency on the encodings used for prior learning, whereas MUNITE requires it of the full-observation latent whose conditional distribution is inferred for every input subset.

\section{Derivation of partial-data self-distillation}
\label{app:gradient-equivalence}
We prove Proposition~\ref{prop} for the unweighted objectives used in training. Throughout, the representation and the teacher are fixed, both objectives use the same distribution over times and conditioning subsets, and each partially observed example is a complete draw from $P_{\mathrm{data}}$ with its unavailable modalities removed independently of the data. The synthetic masking of PolyMNIST-D-Q and FFHQ64 matches this setting, whereas on image--text--audio the available modalities follow the source corpus.

\subsection{The teacher state}
Fix $t\in[0,1)$. Let $W_0\sim\mathcal N(0,I_D)$ be independent of the complete observation and $W_t=(1-t)W_0+tW$. Since $W-W_t=(1-t)(W-W_0)$, the clean-latent prediction $\mu_{\mathcal A}$ determines the velocity
\begin{equation*}
\frac{\mu_{\mathcal A}(w_t,t;x^{\mathcal A})-w_t}{1-t}
=\mathbb E[W-W_0\mid W_t=w_t,X^{\mathcal A}=x^{\mathcal A}],
\end{equation*}
which is the marginal velocity of the interpolation given $X^{\mathcal A}$. By Theorem~1 of \citet{lipman2023flowmatching}, the ODE driven by this velocity from $W_0$ has the conditional time marginals of $W_t$, so an exact teacher gives
\begin{equation}
(W_t^{\mathcal A},X^{\mathcal A})\overset{d}{=}(W_t,X^{\mathcal A}).
\label{eq:app-teacher-marginals}
\end{equation}
A partial example thus supplies a correctly distributed evaluation state without revealing its full latent. Only these marginals must match; individual teacher trajectories may differ from the straight interpolation paths.

\subsection{Proof of Proposition~\ref{prop}}
\begin{proof}
Fix $t$ and write $f_\theta=\mathcal E_\theta(W_t,t;X^{\mathcal S})$ and $\mu_{\mathcal A}=\mu_{\mathcal A}(W_t,t;X^{\mathcal A})$. Since $\mathcal S\subsetneq\mathcal A$, both are functions of $(W_t,X^{\mathcal A})$, and $\mathbb E[W-\mu_{\mathcal A}\mid W_t,X^{\mathcal A}]=0$. The cross term in
\begin{equation*}
\mathbb E\|f_\theta-W\|_2^2
=\mathbb E\|f_\theta-\mu_{\mathcal A}\|_2^2+\mathbb E\|W-\mu_{\mathcal A}\|_2^2
+2\,\mathbb E\langle f_\theta-\mu_{\mathcal A},\mu_{\mathcal A}-W\rangle
\end{equation*}
therefore vanishes. By Equation~\ref{eq:app-teacher-marginals}, the first term on the right is the distillation loss at time $t$ with teacher prediction $\mu_{\mathcal A}$. Averaging over $t$ and the subsets gives
\begin{equation}
\widetilde{\mathcal L}_{\mathrm{CFM}}-\widetilde{\mathcal L}_{\mathrm{dist}}
=\mathbb E\|W-\mu_{\mathcal A}(W_t,t;X^{\mathcal A})\|_2^2,
\label{eq:app-unweighted-gap}
\end{equation}
which does not depend on $\theta$, so the two objectives have the same gradient with respect to $\theta$. By the tower property, their common minimizer over functions of $(W_t,X^{\mathcal S})$ is $\mathbb E[\mu_{\mathcal A}\mid W_t,X^{\mathcal S}]=\mathbb E[W\mid W_t,X^{\mathcal S}]=\mu_{\mathcal S}(W_t,t;X^{\mathcal S})$.
\end{proof}

\subsection{Full observations and subset averaging}
When $\mathcal A=\mathcal M$, the teacher target is $\mathcal E(x^{\mathcal M})$, and the state is the analytic interpolant in Equation~\ref{eq:path}. Under deterministic encoding, the residual in Equation~\ref{eq:app-unweighted-gap} vanishes, and the target is independent of the Gaussian query, so that query can also serve as $W_0$.

The implementation averages the loss over one subset of each size (Algorithm~\ref{alg:training}) instead of drawing a single size per update. By linearity of expectation, this stratified average equals the objective under uniformly sampled sizes, so the derivation applies to it unchanged.

\FloatBarrier
\section{Architecture and training details}
\label{app:implementation}

\subsection{Architecture}
\label{app:architecture}
Images are divided into $8\times8$ patches as in ViT~\citep{dosovitskiy2021vit}, giving 64 tokens per $64\times64$ image, and each categorical modality uses a learnable embedding table. For image--text--audio, every modality is given as a precomputed 768-dimensional feature vector, following MUNI and FlowBind, and a linear layer maps it to eight tokens.

\noindent\begin{minipage}[t]{0.49\linewidth}
\vspace{0pt}
\setlength{\parskip}{.5pc}
The final register states predict $K$ latent tokens of dimension $d$, with total dimension $D=Kd$. We use $K\times d=4\times8$ for PolyMNIST-D-Q, $64\times8$ for FFHQ64, and $1\times768$ for image--text--audio.

Table~\ref{tab:app-joint-config} lists the widths and depths of the Transformer backbones, which use eight attention heads and FiLM timestep conditioning~\citep{perez2018film} where applicable. Table~\ref{tab:app-capacity} compares the total trainable parameters, including the generative decoders.
\end{minipage}\hfill
\begin{minipage}[t]{0.49\linewidth}
\vspace{0pt}
\setlength{\abovecaptionskip}{0pt}
\setlength{\belowcaptionskip}{3pt}
\captionof{table}{Transformer backbones.}
\label{tab:app-joint-config}\centering\small
\setlength{\tabcolsep}{3pt}
\begin{tabularx}{\linewidth}{@{}Yrrc@{}}\toprule
Model & Width & Depth & Time cond.\\\midrule
\multicolumn{4}{@{}l}{\textit{PolyMNIST-D-Q}}\\
MUNITE & 256 & 7 & FiLM\\
CFM & 256 & 16 & FiLM\\
DFM & 256 & 24 & N/A\\\midrule
\multicolumn{4}{@{}l}{\textit{FFHQ64}}\\
MUNITE & 512 & 11 & FiLM\\
CFM & 512 & 22 & FiLM\\
DFM & 512 & 33 & N/A\\\midrule
\multicolumn{4}{@{}l}{\textit{Image--text--audio}}\\
MUNITE & 768 & 6 & FiLM\\
CFM & 768 & 26 & FiLM\\
DFM & 768 & 37 & N/A\\\bottomrule
\end{tabularx}

\end{minipage}\par

MUNI and MUNITE use the same generative-decoder families~\citep{yeo2026muni}. On PolyMNIST-D-Q, the three images share an NCSN++ decoder~\citep{song2021sde} of base width 64, and the labels use two-layer heads of width 256. On FFHQ64, the dense modalities use a spatial NCSN++ decoder of base width 128, and the labels use two-layer heads of width 512. Following common practice for image diffusion models~\citep{esser2024sd3}, both methods train these image decoders with logit-normal timestep sampling. For image--text--audio, we adopt MUNI's decoders and their training details without modification. Image and audio features are decoded by a width-768, depth-10 flow and a width-768, depth-6 feedforward network, respectively. Text is decoded by the Gemma-3-1B~\citep{gemmateam2025gemma3} feature-to-text decoder fine-tuned by FlowBind~\citep{cha2026flowbind}. A feedforward network maps the shared latent to its prefix, and the decoder itself is adapted only through LoRA~\citep{hu2022lora}.

\subsection{Training procedure}
\label{app:subset-sampling}
Each minibatch contains examples with the same set of available modalities $\mathcal A$, from which all conditioning subsets are drawn. Let $n=|\mathcal A|$. For reconstruction, a target $m$ is sampled uniformly from $\mathcal A$, and a conditioning subset $\mathcal S\subseteq\mathcal A$ is drawn by first sampling its size uniformly from $\{1,\ldots,n\}$. On image--text--audio, this rule is mixed equally with one that draws $\mathcal S$ from $\mathcal A\setminus\{m\}$, with size uniform on $\{0,\ldots,n-1\}$, to increase the share of cross-modal reconstruction. Contrastive splits draw the size of $\mathcal S$ uniformly from $\{1,\ldots,n-1\}$. Algorithm~\ref{alg:training} summarizes one update. We set $\lambda_{\mathrm{rec}}=1$, $\lambda_{\mathrm{dist}}=2$, $\lambda_{\mathrm{con}}=0.1$, and $\tau_c=0.1$.
\begin{algorithm}[t]
\caption{MUNITE training step}
\label{alg:training}\small
\begin{algorithmic}[1]
\Require minibatch $x^{\mathcal A}$, teacher Euler steps $N$, learning rate $\eta$
\State Sample $m\in\mathcal A$, $\mathcal S\subseteq\mathcal A$, and $\epsilon\sim\mathcal N(0,I_D)$
\State $\mathcal L_{\mathrm{rec}}\gets\ell_m\big(\mathcal D_{\psi_m}(\mathcal E_\theta(\epsilon,0;x^{\mathcal S})),x^m\big)$ \Comment{target-detached}
\State Sample $w_0\sim\mathcal N(0,I_D)$ and $t\sim\mathcal U[0,1)$
\If{$\mathcal A=\mathcal M$}
  \State $\bar w\gets\mathcal E_\theta(w_0,0;x^{\mathcal M})$;\ \ $w_t\gets(1-t)w_0+t\bar w$;\ \ $\mathcal K\gets\{0,\ldots,M\}$
\Else
  \State $w_t\gets w_0$
  \For{$j=0,\ldots,N-1$}
    \State $w_t\gets w_t+\frac tN\,v_\theta\big(w_t,\frac{jt}{N};x^{\mathcal A}\big)$
  \EndFor
  \State $\bar w\gets\mathcal E_\theta(w_t,t;x^{\mathcal A})$;\ \ $\mathcal K\gets\{0,\ldots,|\mathcal A|-1\}$
\EndIf
\State $\widetilde{\mathcal L}_{\mathrm{dist}}\gets0$
\For{$k\in\mathcal K$}
  \State Sample $\mathcal S_k\subseteq\mathcal A$ with $|\mathcal S_k|=k$
  \State $\widetilde{\mathcal L}_{\mathrm{dist}}\gets\widetilde{\mathcal L}_{\mathrm{dist}}+\frac{1}{|\mathcal K|}\big\|\mathcal E_\theta(\sg(w_t),t;x^{\mathcal S_k})-\sg(\bar w)\big\|_2^2$
\EndFor
\State $\mathcal L_{\mathrm{con}}\gets0$
\If{$|\mathcal A|>1$}
  \State Sample nonempty $\mathcal S\subsetneq\mathcal A$;\ \ $\mathcal C\gets\mathcal A\setminus\mathcal S$
  \State $\mathcal L_{\mathrm{con}}\gets\frac12(\mathcal L_{\mathcal S\to\mathcal C}+\mathcal L_{\mathcal C\to\mathcal S})$
\EndIf
\State $\mathcal L\gets\lambda_{\mathrm{rec}}\mathcal L_{\mathrm{rec}}+\lambda_{\mathrm{dist}}\widetilde{\mathcal L}_{\mathrm{dist}}+\lambda_{\mathrm{con}}\mathcal L_{\mathrm{con}}$
\State $(\theta,\psi)\gets(\theta,\psi)-\eta\,\nabla_{\theta,\psi}\mathcal L$
\end{algorithmic}
\end{algorithm}

\subsection{Baselines}
\label{app:baselines}
MUNI aggregates Gaussian modality encoders by a product of experts without a prior expert and samples this posterior directly for conditional generation~\citep{yeo2026muni}.

CFM and DFM use the Transformer blocks, width, and number of attention heads of MUNITE (Table~\ref{tab:app-joint-config}), with more blocks so that their trainable parameters approximately match those of MUNITE, including its generative decoders (Table~\ref{tab:app-capacity}). DFM has no timestep conditioning and therefore needs more blocks than CFM. CFM represents images and features as continuous patches or vectors and labels as one-hot vectors, and learns a straight flow from Gaussian noise~\citep{lipman2023flowmatching}. DFM represents each label by a single token and each image or feature by frozen VQ-VAE codes, with 1,024-entry codebooks, 64 codes per image, and 32 codes per image--text--audio feature, and is trained with absorbing-mask corruption and clean-token prediction~\citep{gat2024dfm}. During training, the numbers of source and target modalities, $s\ge0$ and $r\ge1$ with $s+r\le n$, are sampled uniformly, and the available modalities are randomly assigned to the source, target, and unused sets.

\FloatBarrier
\section{Benchmark construction and comparison protocol}
\label{app:protocol}

\subsection{Datasets}
\begin{table}[htbp]
\caption{Training observations grouped by available modality subset.
For PolyMNIST-D-Q and FFHQ64, categorical labels are observed in all groups.
$T$, $I$, and $A$ denote text, image, and audio, respectively.}
\label{tab:training-data}
\centering
\small
\setlength{\tabcolsep}{3pt}

\begin{minipage}[t]{0.32\linewidth}
\vspace{0pt}
\centering
\textbf{PolyMNIST} {\footnotesize $(+\,d,q)$}

\begin{tabular*}{\linewidth}{@{\extracolsep{\fill}}lrr@{}}
\toprule
Observed subset & Rows & \% \\
\midrule
$m_0,m_1,m_2$ & 55,000 & 25.0 \\
$m_0,m_1$     & 18,333 & 8.3 \\
$m_0,m_2$     & 18,334 & 8.3 \\
$m_1,m_2$     & 18,333 & 8.3 \\
$m_0$         & 36,667 & 16.7 \\
$m_1$         & 36,666 & 16.7 \\
$m_2$         & 36,667 & 16.7 \\
\midrule
Total & 220,000 & 100 \\
\bottomrule
\end{tabular*}
\end{minipage}
\hfill
\begin{minipage}[t]{0.32\linewidth}
\vspace{0pt}
\centering
\textbf{FFHQ64} {\footnotesize $(+$ Age, Gender$)$}

\begin{tabular*}{\linewidth}{@{\extracolsep{\fill}}lrr@{}}
\toprule
Observed subset & Rows & \% \\
\midrule
RGB, Seg., Norm. & 16,250 & 25.0 \\
RGB, Seg.        & 5,417  & 8.3 \\
RGB, Norm.       & 5,416  & 8.3 \\
Seg., Norm.      & 5,417  & 8.3 \\
RGB              & 10,833 & 16.7 \\
Seg.             & 10,834 & 16.7 \\
Norm.            & 10,833 & 16.7 \\
\midrule
Total & 65,000 & 100 \\
\bottomrule
\end{tabular*}
\end{minipage}
\hfill
\begin{minipage}[t]{0.32\linewidth}
\vspace{0pt}
\centering
\textbf{Image--Text--Audio}

\begin{tabular*}{\linewidth}{@{\extracolsep{\fill}}lrr@{}}
\toprule
Observed subset & Rows & \% \\
\midrule
$T,I$   & 272,703 & 37.8 \\
$I,A$   & 183,449 & 25.5 \\
$T,A$   & 91,028  & 12.6 \\
$T,I,A$ & 173,576 & 24.1 \\
\phantom{$T,I,A$} & & \\
\phantom{$T,I,A$} & & \\
\phantom{$T,I,A$} & & \\
\midrule
Total & 720,756 & 100 \\
\bottomrule
\end{tabular*}
\end{minipage}

\end{table}

PolyMNIST-D-Q, released by \citet{yeo2026muni} as PolyMNIST-Quadrant-Labels and built on PolyMNIST-Quadrant~\citep{zhang2026disentanglement}, contains three $64\times64$ RGB views with their digit (D) and quadrant (Q) labels. Each view composites a resized MNIST digit~\citep{lecun1998mnist} onto a modality-specific background by color inversion. The first 220,000 of the 230,000 tuples are used for training, and the last 10,000 are held out for evaluation. Because digits are drawn from both MNIST splits, the same digit image may appear in training and evaluation tuples.

FFHQ64 is constructed from FFHQ~\citep{karras2019stylegan} and contains RGB, 19-class segmentation, surface normals, age in ten bins (0--2, 3--6, 7--9, 10--14, 15--19, 20--29, 30--39, 40--49, 50--69, and 70+ years), and binary gender. Age, gender, and segmentation labels are taken from FFHQ-Aging~\citep{orel2020lifespan}, and surface normals are pseudo-labels from Marigold E2E-FT~\citep{martingarcia2025e2eft}. The first 65,000 aligned faces are used for training, and the last 5,000 are held out for evaluation.

The image--text--audio corpus (Table~\ref{tab:app-media-data}) contains 720,756 paired or triplet observations from the LAION-COCO aesthetic subset~\citep{schuhmann2022laioncoco}, VGGSound~\citep{chen2020vggsound}, AudioCaps~\citep{kim2019audiocaps}, Flickr30k~\citep{young2014flickr30k}, and E-MM1~\citep{broadbent2025ebind}. The observed modalities of each example follow the original pairing of its corpus rather than synthetic masking. All models operate on fixed, normalized 768-dimensional features from EmbeddingGemma-300M~\citep{schechtervera2025embeddinggemma} for text, CLIP ViT-L/14~\citep{radford2021clip} for images, and LAION CLAP HTSAT-base~\citep{wu2023laionclap} for audio, whose features are mapped to 768 dimensions by the audio autoencoder released with FlowBind~\citep{cha2026flowbind}. For the three-modality observations, we use the caption--image--audio triplets of E-MM1 rated Good Match in its human-annotated 1M split, excluding any triplet that overlaps our evaluation sets or the other training corpora.
\begin{table}[t]
\begin{minipage}[t]{0.49\linewidth}
\vspace{0pt}
\setlength{\abovecaptionskip}{0pt}
\setlength{\belowcaptionskip}{3pt}
\captionof{table}{Image--text--audio training data.}
\label{tab:app-media-data}\centering\small
\setlength{\tabcolsep}{3pt}
\begin{tabularx}{\linewidth}{@{}Ycr@{}}\toprule
Corpus & Modalities & Rows\\\midrule
LAION-COCO aesthetic subset & $T,I$ & 241,873\\
VGGSound & $I,A$ & 183,449\\
AudioCaps & $T,A$ & 91,028\\
Flickr30k & $T,I$ & 30,830\\
E-MM1 & $T,I,A$ & 173,576\\\midrule
Total & & 720,756\\\bottomrule
\end{tabularx}

\end{minipage}\hfill
\begin{minipage}[t]{0.49\linewidth}
\vspace{0pt}
\setlength{\abovecaptionskip}{0pt}
\setlength{\belowcaptionskip}{3pt}
\captionof{table}{Image--text--audio evaluation sets.}
\label{tab:app-media-pools}\centering\small
\setlength{\tabcolsep}{3pt}
\begin{tabularx}{\linewidth}{@{}lYr@{}}\toprule
Routes & Population & Count\\\midrule
$T\leftrightarrow I$ & COCO & 30,000\\
$T\leftrightarrow A$ & AudioCaps & 975\\
$I\leftrightarrow A$ & VGGSound & 15,446\\
$2\to1$, $1\to2$ & AudioCaps-based triplets & 975\\
Unconditional & Joint samples & 1,000\\\bottomrule
\end{tabularx}

\end{minipage}
\end{table}

\subsection{Model sizes and training budgets}
\label{app:baseline-controls}
In Table~\ref{tab:app-capacity}, CFM and DFM approximately match the trainable parameters of MUNITE, whereas MUNI and FlowBind keep the larger architectures of their released implementations.
\begin{table}[t]
\caption{Trainable parameters in millions, including learned encoders, priors, and decoders. Frozen foundation models and the frozen VQ-VAEs of DFM, which have 13.27M parameters on PolyMNIST-D-Q and FFHQ64, are excluded.}
\label{tab:app-capacity}\centering\small
\setlength{\tabcolsep}{4pt}
\begin{tabularx}{\linewidth}{@{}Yrrrrr@{}}\toprule
Task & FlowBind & MUNI & CFM & DFM & MUNITE\\\midrule
PolyMNIST-D-Q & \textemdash & 32.09 & 20.09 & 20.57 & 19.33\\
FFHQ64 & \textemdash & 130.99 & 105.11 & 107.27 & 106.64\\
Image--text--audio & 567.97 & 328.90 & 281.21 & 281.16 & 278.72\\\bottomrule
\end{tabularx}
\end{table}

All methods are trained with Adam~\citep{kingma2015adam} under the training budgets in Table~\ref{tab:app-optimization}. MUNI, MUNITE, CFM, and DFM are evaluated with EMA weights, with decay 0.999 on PolyMNIST-D-Q and 0.9995 on FFHQ64 and image--text--audio, whereas FlowBind follows its official pipeline without EMA.
\begin{table}[t]
\caption{Training budgets. Batch sizes are global, and all methods use a constant learning rate. FlowBind's epoch count assumes 702 updates per epoch.}
\label{tab:app-optimization}\centering\small
\setlength{\tabcolsep}{4pt}
\begin{tabularx}{\linewidth}{@{}Yrrrl@{}}\toprule
Method & Epochs & Updates & Batch & LR\\\midrule
\multicolumn{5}{@{}l}{\textit{PolyMNIST-D-Q}}\\
MUNI, MUNITE & 100 & 85,600 & 256 & $1.5\times10^{-4}$\\
CFM, DFM & 200 & 171,200 & 256 & $1.5\times10^{-4}$\\\midrule
\multicolumn{5}{@{}l}{\textit{FFHQ64}}\\
MUNI, MUNITE & 500 & 252,000 & 128 & $1.5\times10^{-4}$\\
CFM, DFM & 1,000 & 504,000 & 128 & $1.5\times10^{-4}$\\\midrule
\multicolumn{5}{@{}l}{\textit{Image--text--audio}}\\
MUNITE & 100 & 70,200 & 1,024 & $4.0\times10^{-4}$\\
MUNI, CFM, DFM & 200 & 140,400 & 1,024 & $4.0\times10^{-4}$\\
FlowBind & $\approx285$ & 200,000 & 1,024 & $1.0\times10^{-4}$\\\bottomrule
\end{tabularx}
\end{table}

\subsection{Inference budgets}
\label{app:sampling}
In Table~\ref{tab:app-sampling}, CFM is given as many network evaluations as the latent flow and decoder of MUNITE combined, counting two forward passes per step with classifier-free guidance~\citep{ho2022cfg}. DFM samples one token at a time in random order, which avoids the factorization error of predicting several tokens in parallel. FlowBind~\citep{cha2026flowbind} infers a latent from each source and averages them when several sources are given; it has no unconditional sampler.
\begin{table}[t]
\caption{Inference and decoding methods. Euler step counts equal the number of function evaluations, CFG denotes classifier-free guidance, and AR denotes autoregressive sampling. Rendering of image--text--audio features is not included.}
\label{tab:app-sampling}\centering\small
\setlength{\tabcolsep}{4pt}
\begin{tabularx}{\linewidth}{@{}lYYY@{}}\toprule
Method & Conditional & Unconditional & Decoding\\\midrule
\multicolumn{4}{@{}l}{\textit{PolyMNIST-D-Q}}\\
MUNITE & Euler 50 & Euler 50 & Image Euler 50, CFG 1.5\\
MUNI & Posterior sampling & Euler 50 & Image Euler 50, CFG 1.5\\
CFM & Euler 150 & Euler 150 & \textemdash\\
DFM & Random-order AR & Random-order AR & VQ decoder\\\midrule
\multicolumn{4}{@{}l}{\textit{FFHQ64}}\\
MUNITE & Euler 50 & Euler 50 & Image Euler 50, CFG 1.5\\
MUNI & Posterior sampling & Euler 50 & Image Euler 50, CFG 1.5\\
CFM & Euler 150 & Euler 150 & \textemdash\\
DFM & Random-order AR & Random-order AR & VQ decoder\\\midrule
\multicolumn{4}{@{}l}{\textit{Image--text--audio}}\\
MUNITE & Euler 25 & Euler 25 & Image-feature Euler 25, CFG 4\\
MUNI & Posterior sampling & Euler 25 & Image-feature Euler 25, CFG 4\\
CFM & Euler 75 & Euler 75 & \textemdash\\
DFM & Random-order AR & Random-order AR & VQ decoder\\
FlowBind & Euler 10 per source & \textemdash & Euler 10 per target\\\bottomrule
\end{tabularx}
\end{table}

For the models trained on our benchmark, generated image and audio features are rendered by Stable unCLIP small~\citep{stabilityai2023stableunclip}, an unCLIP-style decoder~\citep{ramesh2022unclip}, and by AudioLDM-m-full~\citep{liu2023audioldm}, respectively, with 20 denoising steps as in the benchmark. Because CFM and DFM have no modality-specific decoders, they generate normalized text features as FlowBind does, and all three decode them with the frozen FlowBind text decoder, on which the text decoders of MUNI and MUNITE are built (Appendix~\ref{app:architecture}).

\subsection{Verifiers}
\label{app:evaluators}
Following MUNI~\citep{yeo2026muni}, accuracy and coherence on PolyMNIST-D-Q and FFHQ64 are measured with verifiers that predict the labels and dense maps of generated images. Each verifier fine-tunes a pretrained SegFormer backbone~\citep{xie2021segformer}, MiT-B0 for PolyMNIST-D-Q and MiT-B5 for FFHQ64, on the classification, segmentation, and normal-estimation tasks of its benchmark. The verifiers are trained on all 230,000 PolyMNIST-D-Q tuples and all 70,000 FFHQ faces, including the held-out ones, and are kept frozen when scoring all methods.

\section{Evaluation metrics and additional results}
\label{app:metrics}

\subsection{PolyMNIST-D-Q}
\label{app:poly-results}
PolyMNIST-D-Q is evaluated on the 10,000 held-out tuples and on 10,000 unconditional samples. FD uses verifier features instead of Inception features and is averaged over the three views.

For label-to-image generation, accuracy compares the verifier's prediction for each generated view with the input label, and the $(d,q)\to m_i$ route counts a view as correct only if both labels match. Unconditional coherence is the fraction of samples in which the verifier assigns the generated label to all three views. One-to-many coherence is the agreement between pairs of generated views on the attribute that the input leaves out, the quadrant for $d\to(m_0,m_1,m_2)$ and the digit for $q\to(m_0,m_1,m_2)$, averaged over the three pairs and all samples.

Pairwise agreement alone does not show that the missing attribute is sampled correctly, since a model that always generates the same quadrant or digit for a given input would also agree fully. Table~\ref{tab:app-poly-missing} therefore also compares the missing label of each generated view with its data conditional, which is uniform because every digit--quadrant pair occurs equally often. MUNITE stays close to the finite-sample reference for both attributes, so its agreement does not come from a collapsed prediction. For the missing quadrant, MUNI is equally close to uniform, yet its views agree only at chance level: the quadrant of each view is correctly distributed, but the three views choose their quadrants independently. When $m_1$ and $m_2$ are decoded from separately drawn latents (Indep.), MUNITE's agreement also falls to chance, while the accuracy of each view on the input label remains 0.995 and 1.000 for digit and quadrant inputs. The agreement among views is therefore carried by the shared latent sample.
\begin{table}[b]
\caption{Missing attribute in PolyMNIST-D-Q one-to-many generation. TV is the total variation distance between the missing-label distribution of a generated view and the uniform data conditional, averaged over input labels and views; Ref.\ is its expected value for the same number of samples drawn from that conditional. Indep.\ is MUNITE with $m_1$ and $m_2$ decoded from separately drawn latents, keeping $m_0$ and all decoder noise fixed. Chance is the coherence of independent uniform views.}
\label{tab:app-poly-missing}\centering\small
\setlength{\tabcolsep}{4pt}
\begin{tabular}{@{}lccccccc@{}}\toprule
& \multicolumn{3}{c}{TV $\downarrow$} & \multicolumn{4}{c}{Coherence $\uparrow$}\\\cmidrule(lr){2-4}\cmidrule(l){5-8}
Route & MUNI & MUNITE & Ref. & MUNI & MUNITE & Indep. & Chance\\\midrule
$d\to(m_0,m_1,m_2)$ & 0.029 & 0.029 & 0.022 & 0.2528 & 1.0000 & 0.2500 & 0.25\\
$q\to(m_0,m_1,m_2)$ & 0.109 & 0.034 & 0.024 & 0.1074 & 0.9904 & 0.1014 & 0.10\\\bottomrule
\end{tabular}
\end{table}

\subsection{FFHQ64}
\label{app:ffhq-results}
FFHQ64 is evaluated on the 5,000 held-out tuples and on 5,000 unconditional samples, and RGB fidelity is measured by FID~\citep{heusel2017fid} with InceptionV3~\citep{szegedy2016inception} features. Age and gender accuracy compare the verifier's predictions for a generated face with the input labels, requiring both to match on the two-label route, and unconditional age and gender coherence compare them with the generated labels. Generated dense maps are scored against the verifier's predictions for the generated face. RGB--segmentation coherence is the pixel accuracy over the 19 classes, and normal error is one minus the mean per-pixel cosine similarity. For $(\mathrm{Age},\mathrm{Gender})\to(\mathrm{RGB},\mathrm{Seg.},\mathrm{Normals})$, the inputs provide no reference maps, so these scores compare the generated targets with one another.

\subsection{Image--text--audio}
\label{app:media-results}
The evaluation follows the routes and evaluation sets of FlowBind and MUNI~\citep{cha2026flowbind,yeo2026muni}, listed in Table~\ref{tab:app-media-pools}. The 975 joint-evaluation triplets contain AudioCaps~\citep{kim2019audiocaps} captions and real audio with FLUX.1-generated images~\citep{blackforestlabs2024flux}, so the three modalities are not co-recorded. CIDEr is computed with one reference caption per COCO~\citep{lin2014coco} or AudioCaps item.

Fidelity is measured by FID~\citep{heusel2017fid} with InceptionV3~\citep{szegedy2016inception} features, FAD~\citep{kilgour2019fad} with VGGish~\citep{hershey2017vggish} embeddings, and CIDEr~\citep{vedantam2015cider}. Source--target alignment is measured by CLIP ViT-B/32~\citep{radford2021clip} for text--image and LAION larger-CLAP~\citep{wu2023laionclap} for text--audio, both different from the encoders used to extract the training features. CLIP and CLAP similarities and AIS are reported on a scale of 100.

AIS measures how highly a generated sample ranks its paired counterpart within a reference pool, using Wav2CLIP~\citep{wu2022wav2clip} audio and CLIP ViT-B/32~\citep{radford2021clip} image embeddings as in FlowBind and MUNI~\citep{cha2026flowbind,yeo2026muni}. For $N$ generated samples with normalized embeddings $g_i$, the normalized embeddings $c_i$ of their counterparts, and a pool $R$ of counterpart embeddings spanning the evaluation set,
\begin{equation}
\mathrm{AIS}=\frac{100}{N|R|}\sum_{i=1}^{N}\sum_{r\in R}\mathbf 1[\langle g_i,r\rangle<\langle g_i,c_i\rangle].
\label{eq:ais}
\end{equation}
For text-conditioned joint image--audio generation, each generated image is paired with its generated audio, and the pool contains the generated audios.

\paragraph{External pretrained models.}
\label{app:external-baselines}
CoDi~\citep{tang2023codi} and OmniFlow~\citep{li2025omniflow} are evaluated as released pretrained models. We measure their one-to-many coherence on the same 975 triplets and take their other scores from MUNI~\citep{yeo2026muni}, where they are reported to two decimal places. For text-conditioned joint image--audio generation, CoDi uses its default DDIM sampler~\citep{song2021ddim} with 50 steps and guidance 7.5, and OmniFlow uses 28 steps with guidance 5.

\subsubsection{Single-target generation}
\label{app:media-single-results}
Tables~\ref{tab:media-fidelity}--\ref{tab:media-many} report the one-to-one and many-to-one scores summarized in Section~\ref{sec:exp-media}. MUNITE has the best score on three of the six one-to-one fidelity measures (Table~\ref{tab:media-fidelity}). Relative to MUNI, it has higher text--audio and lower text--image alignment in both one-to-one and many-to-one generation (Tables~\ref{tab:media-alignment} and~\ref{tab:media-many}). Since its single outputs are not uniformly better, the coherence gains in Table~\ref{tab:media-coherence} are not explained by single-output quality.
\begin{table}[htbp]
\caption{Image--text--audio one-to-one fidelity on the evaluation sets of Table~\ref{tab:app-media-pools}, grouped by target modality. CoDi and OmniFlow scores are taken from \citet[Table~8]{yeo2026muni}. Bold marks the best score in each column.}
\label{tab:media-single}\label{tab:media-fidelity}\centering\small
\setlength{\tabcolsep}{4pt}
\begin{tabularx}{\linewidth}{@{}l*{6}{>{\centering\arraybackslash}X}@{}}\toprule
& \multicolumn{2}{c}{Image: FID $\downarrow$} & \multicolumn{2}{c}{Audio: FAD $\downarrow$} & \multicolumn{2}{c}{Text: CIDEr $\uparrow$}\\
\cmidrule(lr){2-3}\cmidrule(lr){4-5}\cmidrule(l){6-7}
Method & $T\to I$ & $A\to I$ & $T\to A$ & $I\to A$ & $I\to T$ & $A\to T$\\\midrule
CoDi & 24.97 & 53.60 & 8.67 & 13.92 & 11.97 & 7.70\\
OmniFlow & 20.52 & 97.56 & 4.08 & 4.87 & 26.60 & 31.42\\
\midrule
MUNI & 15.615 & 22.786 & 3.507 & 2.246 & \textbf{44.400} & 51.794\\
CFM & \textbf{13.455} & 28.971 & 3.564 & 2.105 & 13.085 & 19.729\\
DFM & 14.400 & 28.374 & 3.670 & \textbf{1.991} & 11.263 & 20.909\\
FlowBind & 14.826 & 24.270 & 3.477 & 2.166 & 17.211 & 30.905\\
MUNITE & 15.939 & \textbf{21.511} & \textbf{3.059} & 2.135 & 37.409 & \textbf{57.592}\\
\bottomrule\end{tabularx}
\end{table}

\begin{table}[htbp]
\caption{Image--text--audio one-to-one alignment. CoDi and OmniFlow scores are taken from \citet[Table~9]{yeo2026muni}. Higher is better, and bold marks the best score in each column.}
\label{tab:media-alignment}\centering\small
\setlength{\tabcolsep}{4pt}
\begin{tabularx}{\linewidth}{@{}l*{6}{>{\centering\arraybackslash}X}@{}}\toprule
& \multicolumn{2}{c}{Text--image: CLIP} & \multicolumn{2}{c}{Text--audio: CLAP} & \multicolumn{2}{c}{Image--audio: AIS}\\
\cmidrule(lr){2-3}\cmidrule(lr){4-5}\cmidrule(l){6-7}
Method & $T\to I$ & $I\to T$ & $T\to A$ & $A\to T$ & $I\to A$ & $A\to I$\\\midrule
CoDi & 29.70 & 26.05 & 16.72 & 32.72 & 59.45 & 84.47\\
OmniFlow & \textbf{30.76} & 27.48 & 30.07 & \textbf{44.74} & 76.52 & 64.22\\
\midrule
MUNI & 29.718 & \textbf{28.373} & 30.596 & 40.102 & 90.390 & \textbf{94.773}\\
CFM & 27.710 & 24.918 & 25.127 & 32.094 & 82.888 & 80.842\\
DFM & 27.321 & 24.353 & 24.883 & 32.262 & 79.903 & 79.933\\
FlowBind & 29.202 & 26.004 & 28.011 & 35.393 & 87.621 & 92.445\\
MUNITE & 29.007 & 27.526 & \textbf{30.695} & 40.335 & \textbf{90.655} & 94.254\\
\bottomrule\end{tabularx}
\end{table}

\begin{table}[htbp]
\caption{Image--text--audio many-to-one alignment on 975 triplets, with each generated target scored against both source modalities. CoDi and OmniFlow scores are taken from \citet[Table~3]{yeo2026muni}. Higher is better, and bold marks the best score in each column.}
\label{tab:media-many}\centering\small
\setlength{\tabcolsep}{4pt}
\begin{tabularx}{\linewidth}{@{}l*{6}{>{\centering\arraybackslash}X}@{}}\toprule
& \multicolumn{2}{c}{$(I,A)\to T$} & \multicolumn{2}{c}{$(T,A)\to I$} & \multicolumn{2}{c}{$(T,I)\to A$}\\
\cmidrule(lr){2-3}\cmidrule(lr){4-5}\cmidrule(l){6-7}
Method & CLIP & CLAP & CLIP & AIS & CLAP & AIS\\\midrule
CoDi & 24.05 & 33.72 & 24.98 & 85.52 & 11.06 & 65.31\\
OmniFlow & 24.73 & 36.26 & 26.41 & 81.51 & 13.50 & 63.55\\
\midrule
MUNI & \textbf{27.922} & 36.695 & \textbf{27.056} & 85.168 & 30.524 & 82.621\\
CFM & 25.221 & 31.889 & 25.712 & 80.305 & 24.907 & 76.098\\
DFM & 24.855 & 33.259 & 25.398 & 80.347 & 25.141 & 75.846\\
FlowBind & 25.541 & 31.624 & 25.841 & \textbf{88.089} & 28.748 & 81.878\\
MUNITE & 27.289 & \textbf{38.969} & 26.721 & 86.745 & \textbf{30.653} & \textbf{82.677}\\
\bottomrule\end{tabularx}
\end{table}

\subsubsection{Component ablations}
\label{app:media-ablations}
Table~\ref{tab:media-ablation} ablates the training components of MUNITE on image--text--audio. \emph{No target detach} lets reconstruction gradients flow through the target's key/value paths, \emph{no contrastive} removes $\mathcal L_{\mathrm{con}}$, and \emph{full-only distillation} applies the distillation loss only to fully observed examples, so partially observed examples no longer supervise the latent conditionals.

Every variant keeps one-to-many coherence within two points of the full model and above MUNI on all three routes (Table~\ref{tab:media-coherence}), so the one-to-many advantage does not depend on any single auxiliary component. Unconditional coherence depends more on these components. Target detaching matters most: without it, CLAP and AIS coherence drop by 5.6 and 9.6 points, and all three unconditional scores fall below those of MUNI. Restricting distillation to fully observed examples lowers CLAP and AIS coherence by 2.8 and 4.6 points, since partially observed examples then no longer shape the latent distribution sampled in unconditional generation.

For single-target generation, removing target detaching or the contrastive loss degrades fidelity on all six one-to-one routes, most visibly image-to-audio FAD and audio-to-text CIDEr without the contrastive loss. Full-only distillation gives mixed results, slightly improving fidelity on the four routes that generate or condition on images and degrading text-to-audio FAD and audio-to-text CIDEr, so partial-data distillation mainly benefits coherence.
\begin{table}[t]
\caption{Component ablations on image--text--audio. Control is the full MUNITE model of Tables~\ref{tab:media-coherence} and~\ref{tab:media-fidelity}--\ref{tab:media-many}. Each variant removes one component and otherwise keeps the control's architecture, optimization, and evaluation protocol. The $1\to N$ and unconditional rows report coherence among jointly generated modalities. Best results in each row are in bold.}
\label{tab:media-ablation}\centering\small
\setlength{\tabcolsep}{4pt}
\begin{tabularx}{\linewidth}{@{}llYrrrr@{}}\toprule
Route & Input $\to$ output & Metric & Control & \begin{tabular}[b]{@{}r@{}}No target\\detach\end{tabular} & \begin{tabular}[b]{@{}r@{}}No\\contrastive\end{tabular} & \begin{tabular}[b]{@{}r@{}}Full-only\\distillation\end{tabular}\\\midrule
\multirow{12}{*}{$1\to1$} & $T\to I$ & FID $\downarrow$ & 15.939 & 16.700 & 17.308 & \textbf{15.230}\\
 &  & CLIP $\uparrow$ & 29.007 & 28.543 & 27.750 & \textbf{29.593}\\
 & $I\to T$ & CIDEr $\uparrow$ & 37.409 & 35.414 & 35.741 & \textbf{39.667}\\
 &  & CLIP $\uparrow$ & 27.526 & 27.261 & 26.919 & \textbf{27.882}\\
 & $T\to A$ & FAD $\downarrow$ & \textbf{3.059} & 3.145 & 3.321 & 3.234\\
 &  & CLAP $\uparrow$ & 30.695 & 30.458 & \textbf{31.461} & 30.540\\
 & $A\to T$ & CIDEr $\uparrow$ & \textbf{57.592} & 53.462 & 51.927 & 53.302\\
 &  & CLAP $\uparrow$ & 40.335 & 40.670 & 39.588 & \textbf{40.678}\\
 & $I\to A$ & FAD $\downarrow$ & 2.135 & 2.146 & 2.622 & \textbf{2.094}\\
 &  & AIS $\uparrow$ & 90.655 & 90.768 & \textbf{91.952} & 90.157\\
 & $A\to I$ & FID $\downarrow$ & 21.511 & 21.923 & 22.466 & \textbf{20.823}\\
 &  & AIS $\uparrow$ & 94.254 & 93.932 & 93.565 & \textbf{94.260}\\\midrule
\multirow{6}{*}{$N\to1$} & $(I,A)\to T$ & CLIP $\uparrow$ & 27.289 & 27.191 & 27.148 & \textbf{27.441}\\
 &  & CLAP $\uparrow$ & \textbf{38.969} & 38.400 & 36.322 & 38.517\\
 & $(T,A)\to I$ & CLIP $\uparrow$ & 26.721 & 26.670 & 26.581 & \textbf{26.799}\\
 &  & AIS $\uparrow$ & \textbf{86.745} & 85.898 & 85.624 & 85.127\\
 & $(T,I)\to A$ & CLAP $\uparrow$ & 30.653 & 30.511 & \textbf{31.420} & 30.804\\
 &  & AIS $\uparrow$ & 82.677 & 82.461 & \textbf{83.407} & 83.055\\\midrule
\multirow{3}{*}{$1\to N$} & $T\to(I,A)$ & AIS $\uparrow$ & 83.336 & \textbf{84.174} & 83.771 & 82.487\\
 & $I\to(T,A)$ & CLAP $\uparrow$ & 34.762 & \textbf{34.844} & 32.852 & 34.392\\
 & $A\to(T,I)$ & CLIP $\uparrow$ & \textbf{26.334} & 26.261 & 26.024 & 26.287\\\midrule
\multirow{3}{*}{Uncond.} & All three & CLIP $(T,I)$ $\uparrow$ & 27.398 & 26.713 & 26.639 & \textbf{27.539}\\
 &  & CLAP $(T,A)$ $\uparrow$ & \textbf{30.220} & 24.657 & 29.370 & 27.379\\
 &  & AIS $(I,A)$ $\uparrow$ & \textbf{87.045} & 77.451 & 85.760 & 82.429\\\bottomrule
\end{tabularx}
\end{table}

\FloatBarrier

\FloatBarrier
\renewcommand{\bottomfraction}{.7}
\section{Qualitative results}
\label{app:qualitative}
\suppressfloats[t]
Qualitative examples complement the quantitative results of Section~\ref{sec:experiments}, particularly for joint generation, where they show whether the generated outputs agree on content that the input leaves unspecified. The examples are selected to illustrate each route, and the outputs of routes whose only output is audio are provided on the project page.

\subsection{PolyMNIST-D-Q}
\label{app:qualitative-poly}
\begin{figure}[t]
\centering
\includegraphics[width=\linewidth]{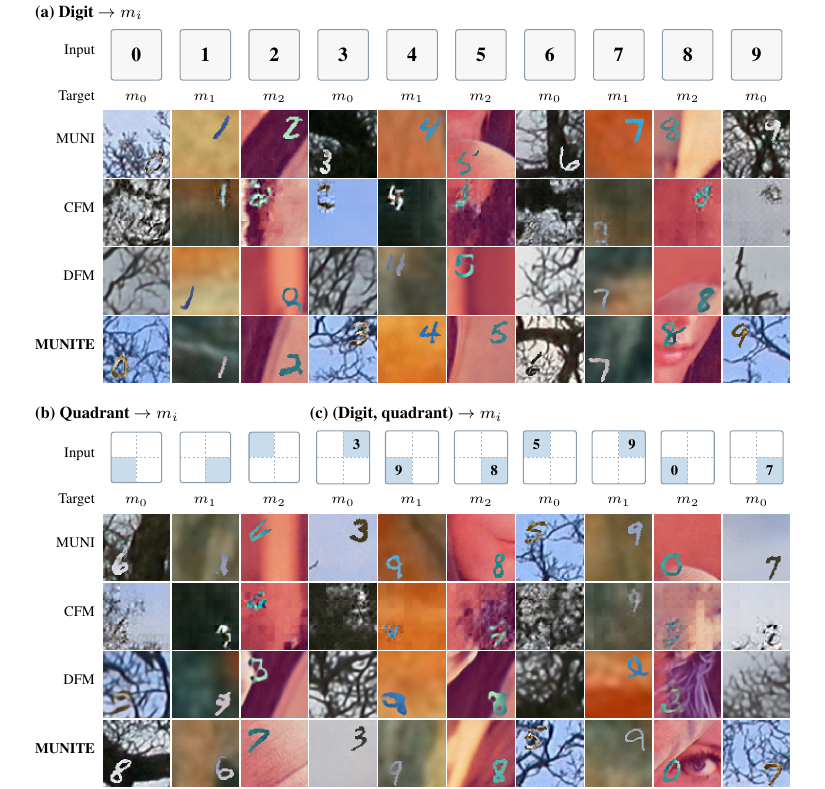}
\caption{PolyMNIST-D-Q single-target generation. Icons show the observed labels: a digit, the conditioned quadrant shaded in a $2\times2$ grid, or the digit placed in that quadrant. Each column generates the view listed under its input.}
\label{fig:qual-poly-single}
\end{figure}
\begin{figure}[p]
\centering
\includegraphics[width=\linewidth]{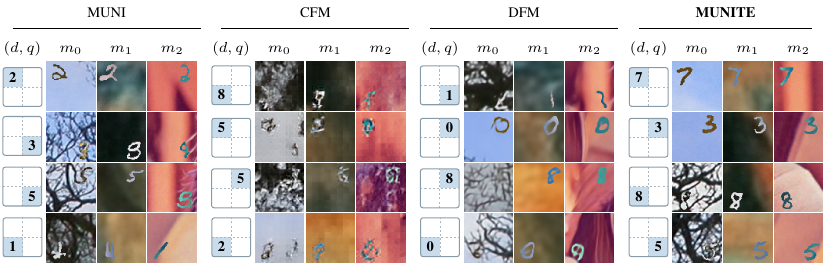}
\caption{Unconditional PolyMNIST-D-Q co-generation. Each group is one joint sample of the three views and both labels, and its icon shows the generated digit in the generated quadrant.}
\label{fig:qual-poly-uncond}
\end{figure}
\begin{figure}[p]
\centering
\includegraphics[width=\linewidth]{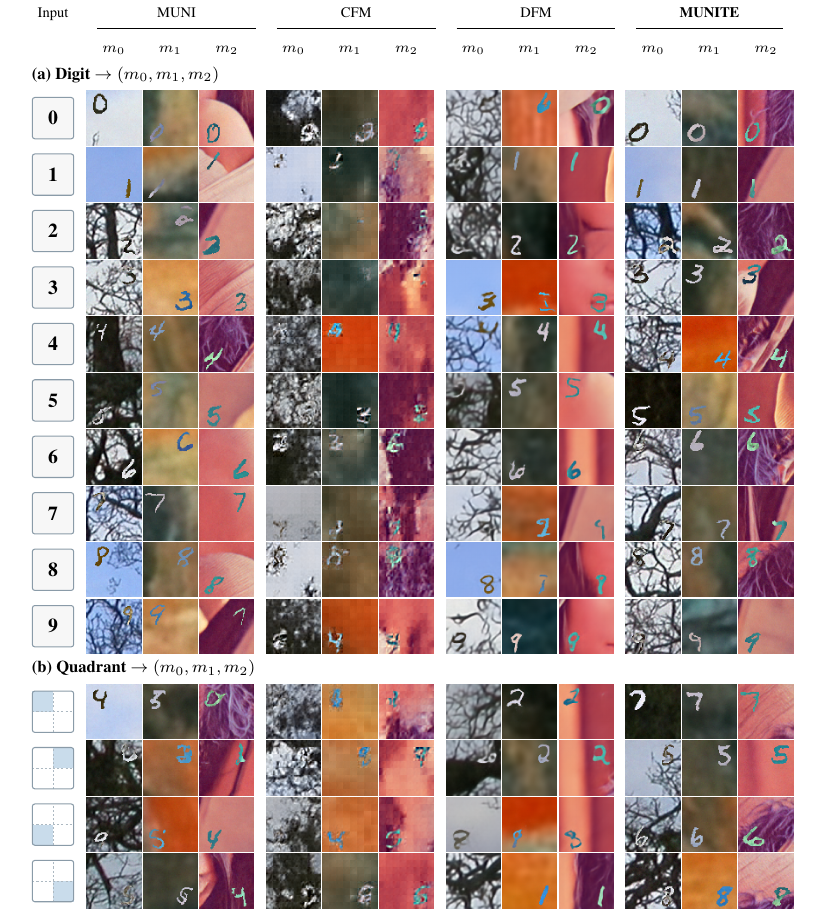}
\caption{PolyMNIST-D-Q one-to-many generation. Each row generates all three views from a digit (a) or a quadrant (b), so the views must agree on the unobserved attribute.}
\label{fig:qual-poly-1toN}
\end{figure}
In single-target generation (Figure~\ref{fig:qual-poly-single}), MUNI and MUNITE render the conditioned labels in every example, whereas CFM often produces illegible digits and DFM frequently omits the digit or renders a different one, in line with their accuracies in Table~\ref{tab:image-main}.

In unconditional co-generation (Figure~\ref{fig:qual-poly-uncond}), the three views and two labels of a sample must agree with one another. All four samples of MUNITE agree, whereas MUNI's views sometimes disagree, showing different digits or placing one digit in different quadrants.

In the one-to-many routes (Figure~\ref{fig:qual-poly-1toN}), the views must also agree on the attribute that the input leaves unspecified. MUNITE places a given digit in the same quadrant in all three views and draws the same digit for a given quadrant. MUNI's views are each consistent with the input but draw the missing attribute separately, so the digit 1 appears in three different quadrants and the lower-left quadrant holds the digits 0, 5, and 4. This is the gap between target-wise sufficiency and joint coherence discussed in Section~\ref{sec:relation-to-multimodal}.
\FloatBarrier

\subsection{FFHQ64}
\label{app:qualitative-ffhq}
\suppressfloats[t]
\begin{figure}[!h]
\centering
\includegraphics[width=\linewidth]{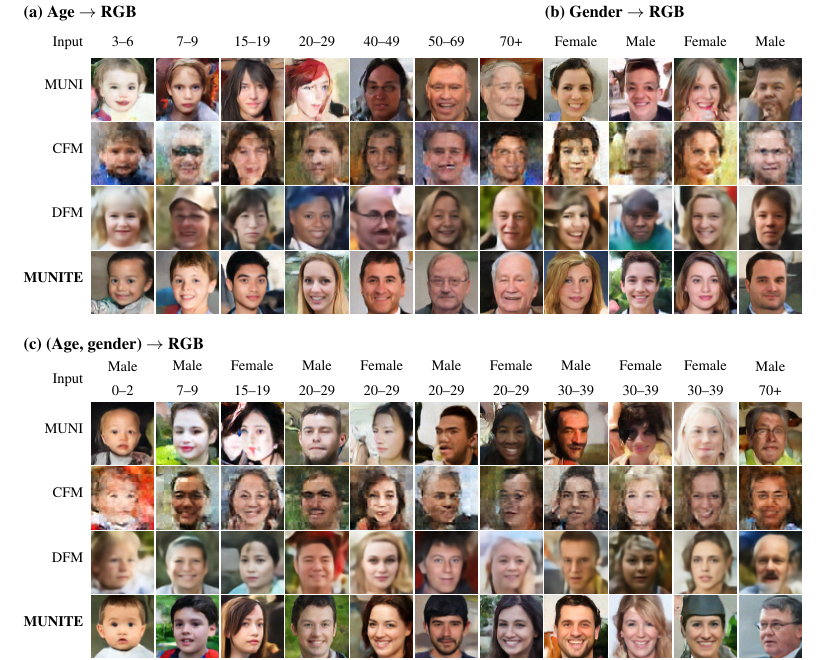}
\caption{FFHQ64 RGB generation from age (a), gender (b), or both attributes (c).}
\label{fig:qual-ffhq-single}
\end{figure}
\begin{figure}[tp]
\centering
\includegraphics[width=\linewidth]{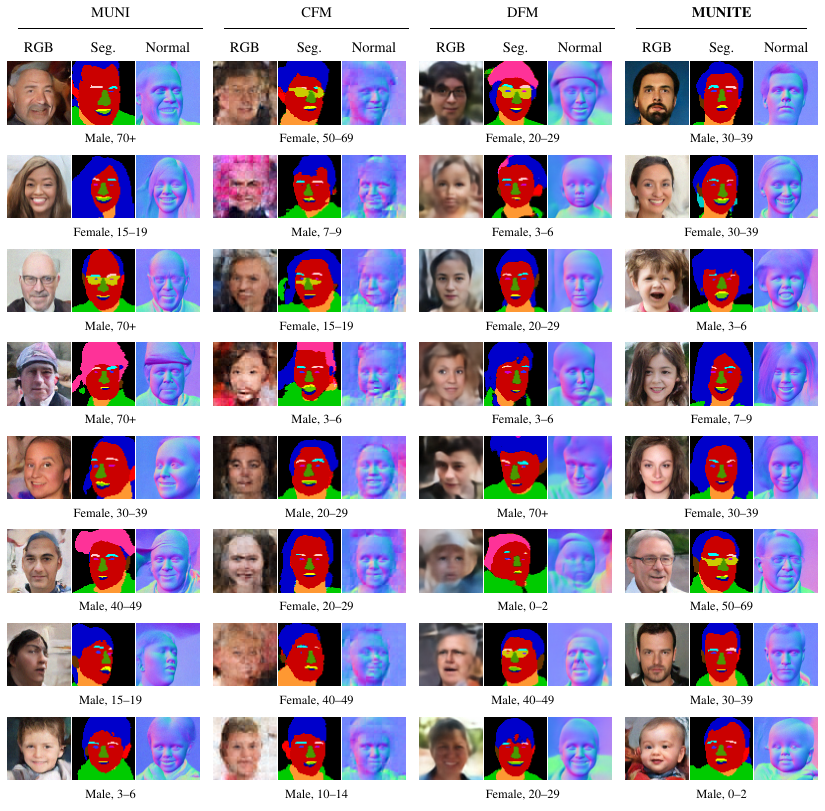}
\caption{Unconditional FFHQ64 co-generation of all five modalities. The generated gender and age labels of each joint sample appear below its maps.}
\label{fig:qual-ffhq-uncond}
\end{figure}
\begin{figure}[tp]
\centering
\includegraphics[width=\linewidth]{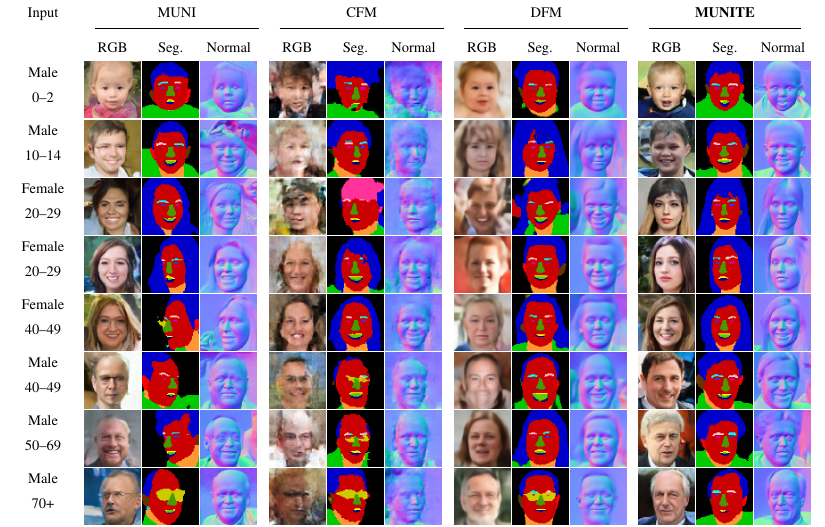}
\caption{FFHQ64 many-to-many generation of RGB, segmentation, and surface normals from age and gender. The three maps of each group come from one joint sample.}
\label{fig:qual-ffhq-mn}
\end{figure}
In RGB generation from age and gender (Figure~\ref{fig:qual-ffhq-single}), MUNITE's faces follow the conditioned age from young children to older adults and match the conditioned gender. MUNI's faces also follow the attributes but are occasionally distorted, CFM's show strong artifacts, and DFM's are noticeably blurred.

In unconditional co-generation of all five modalities (Figure~\ref{fig:qual-ffhq-uncond}), MUNITE's generated labels agree with the apparent age and gender of its faces, and its segmentation and normals align with the RGB image. The man in the 50--69 sample, for instance, wears glasses in the RGB image, and the segmentation marks them as well.

In the many-to-many route (Figure~\ref{fig:qual-ffhq-mn}), the segmentation and normals must describe the generated face. MUNITE's maps follow the head pose, hairline, and facial layout of its RGB image in every example. MUNI's maps sometimes describe a different face, as for the male 40--49 input, whose segmentation omits the glasses in the RGB image.
\FloatBarrier

\subsection{Image--text--audio}
\label{app:qualitative-media}
\suppressfloats[t]
\begin{figure}[tp]
\centering
\includegraphics[width=\linewidth]{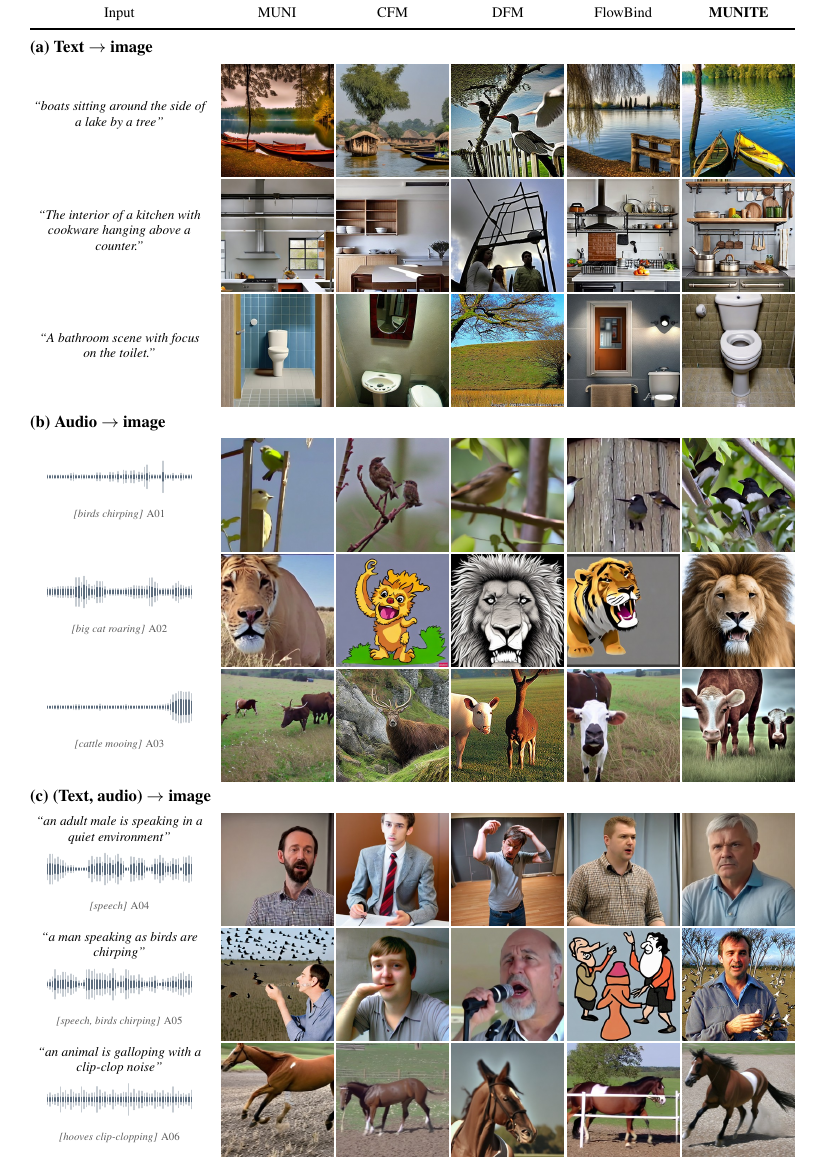}
\caption{Image--text--audio one-to-one and many-to-one generation of images. Audio inputs are shown as waveforms.}
\label{fig:qual-media-image}
\end{figure}
\begin{figure}[tp]
\centering
\includegraphics[width=\linewidth]{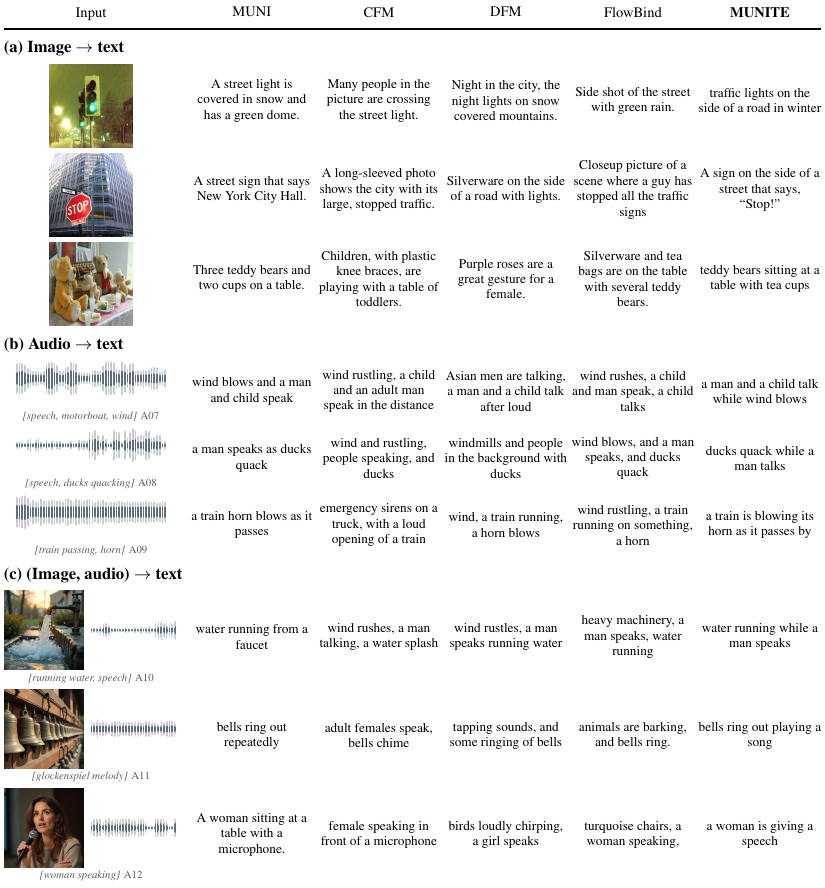}
\caption{Image--text--audio one-to-one and many-to-one generation of text. Audio inputs are shown as waveforms.}
\label{fig:qual-media-text}
\end{figure}
\begin{figure}[p]
\centering
\includegraphics[width=\linewidth]{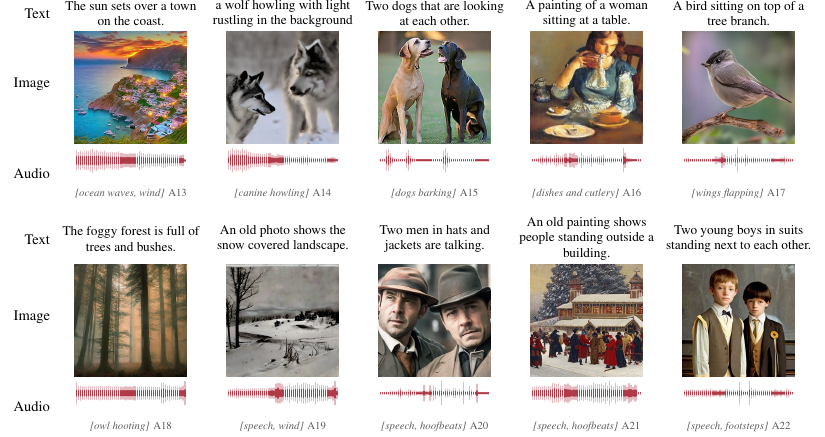}
\caption{Unconditional image--text--audio co-generation by MUNITE. Each column decodes one latent sample into text, image, and audio.}
\label{fig:qual-media-uncond}
\end{figure}
\begin{figure}[p]
\centering
\includegraphics[width=\linewidth]{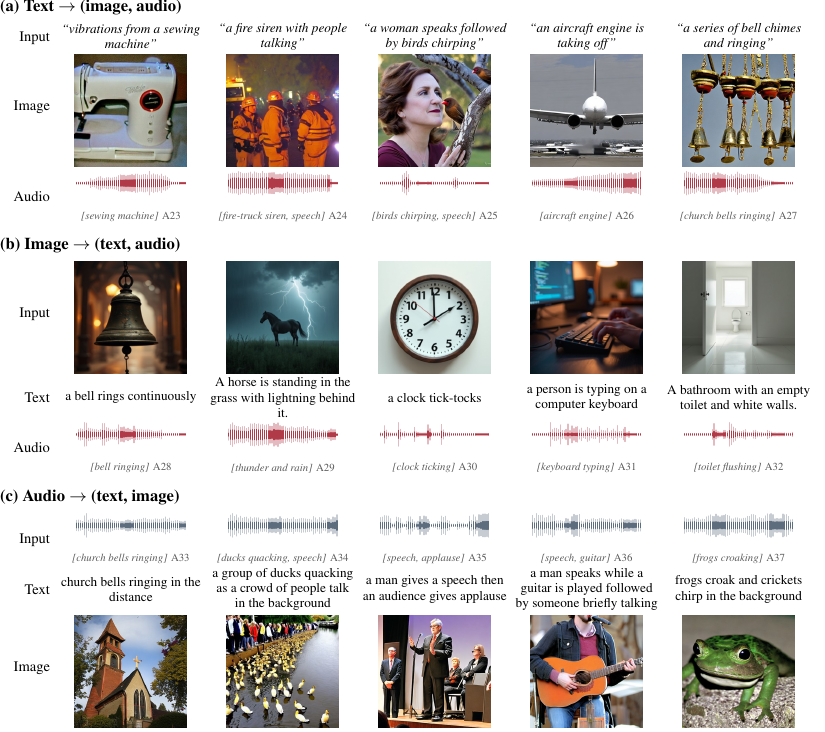}
\caption{One-to-many image--text--audio generation by MUNITE. In each column, the two outputs are decoded independently from one latent sample inferred from the input.}
\label{fig:qual-media-1toN}
\end{figure}
Audio clips are provided on the project page under the IDs printed below their waveforms. The bracketed descriptions are only a reading aid and summarize the top AudioSet labels predicted by PaSST~\citep{koutini22passt}.

In one-to-one and many-to-one generation (Figures~\ref{fig:qual-media-image} and~\ref{fig:qual-media-text}), MUNITE's outputs follow the specific content of their inputs. Its image for the roaring lions in A02 shows a lion where FlowBind shows a tiger, and its caption for the stop-sign image reads the sign. With two inputs, its captions draw on both. For the faucet image paired with running water and speech (A10), MUNITE mentions both the water and the speaker, whereas MUNI mentions only the water.

For joint generation, we show samples of MUNITE (Figures~\ref{fig:qual-media-uncond} and~\ref{fig:qual-media-1toN}). An unconditional sample has no input, so whatever its outputs share comes from the latent sample. The three outputs of each column describe one scene, such as a coastal town at sunset with the sound of waves (A13). For the painting of a woman at a table, the image shows cups and a plate and the audio contains dishes and cutlery (A16), a detail that the caption leaves out.

In the one-to-many routes, the two outputs are decoded independently from one latent sample, so any detail they share beyond the input must also be carried by that sample. For the ducks and speech in A34, both outputs turn the speech into a crowd, with the caption describing people talking near quacking ducks and the image showing a crowd lining a canal full of ducks. Given the horse under lightning, the caption mentions the lightning and the audio adds thunder and rain (A29).
\FloatBarrier

\end{document}